\Crefname{equation}{Eq.\!}{Eqs.\!}
\newcommand*\numcircledmod[1]{\raisebox{.5pt}{\textcircled{\raisebox{-.9pt} {#1}}}}
\definecolor{mygreen}{RGB}{113, 201, 149}
\definecolor{mypink}{RGB}{225, 106, 212}
\theoremstyle{plain}
\newtheorem{theorem}{Theorem}[section]
\newtheorem{proposition}[theorem]{Proposition}
\newtheorem{lemma}[theorem]{Lemma}
\theoremstyle{plain}
\newtheorem{definition}[theorem]{Definition}
\theoremstyle{plain}
\newtheorem{remark}[theorem]{Remark}
\renewcommand{\parallel}{\mathrel{/\mkern-5mu/}}
\newcommand{\notparallel}{%
  \mathrel{\mathpalette\not@parallel\relax}%
}
\newcommand{\not@parallel}[2]{%
  \ooalign{\reflectbox{\(\m@th#1\smallsetminus\)}\cr\hfil\(\m@th#1\parallel\)\cr}%
}
\DeclareMathOperator*{\argmax}{arg\,max}
\DeclareMathOperator*{\argmin}{arg\,min}
\DeclareMathOperator{\sgn}{sgn}
\newcommand{\at}[3]{\left.#1\right\vert_{#2}^{#3}}
\newenvironment{takeaway}[1][]
{
  \begin{tcolorbox}
    [%
      title=Takeaway:,
      attach title to upper={\ },
      fonttitle=\bfseries,
      coltitle=black,
      boxrule=0.5pt,
      arc=4pt,
      left=1pt,
      right=1pt,
      bottom=2pt,
      top=2pt,
      grow to left by=-0.01cm,
      grow to right by=-0.01cm,
      rounded corners
    ]{}
    }
    {
  \end{tcolorbox}
}
\titlespacing*{\section}{0pt}{*1.6}{*0.8}
\titlespacing*{\subsection}{0pt}{*1.5}{*0.6}
\renewcommand\paragraph{\@startsection{paragraph}{4}{\z@}%
  {0ex \@plus1ex \@minus1ex}%
  {-1em}%
  {\normalfont\normalsize\bfseries}}
\title{Most Influential Subset Selection: Challenges, Promises, and Beyond}
\author{%
    Yuzheng Hu\textsuperscript{\textnormal{1}}\quad Pingbang Hu\textsuperscript{\textnormal{2}}\quad Han Zhao\textsuperscript{\textnormal{1}}\quad Jiaqi W.\ Ma\textsuperscript{\textnormal{2}}\\
    \textsuperscript{1}Department of Computer Science\quad \textsuperscript{2}School of Information Sciences\\
    University of Illinois Urbana-Champaign\\
    {\small \texttt{\{yh46,pbb,hanzhao,jiaqima\}@illinois.edu}}
}
\begin{document}

\maketitle

\begin{abstract}
    How can we attribute the behaviors of machine learning models to their training data? While the classic influence function sheds light on the impact of individual samples, it often fails to capture the more complex and pronounced collective influence of a set of samples. To tackle this challenge, we study the Most Influential Subset Selection (MISS) problem, which aims to identify a subset of training samples with the greatest collective influence. We conduct a comprehensive analysis of the prevailing approaches in MISS, elucidating their strengths and weaknesses. Our findings reveal that influence-based greedy heuristics, a dominant class of algorithms in MISS, can provably fail even in linear regression. We delineate the failure modes, including the errors of influence function and the non-additive structure of the collective influence. Conversely, we demonstrate that an adaptive version of these heuristics which applies them iteratively, can effectively capture the interactions among samples and thus partially address the issues. Experiments on real-world datasets corroborate these theoretical findings and further demonstrate that the merit of adaptivity can extend to more complex scenarios such as classification tasks and non-linear neural networks. We conclude our analysis by emphasizing the inherent trade-off between performance and computational efficiency, questioning the use of additive metrics such as the
    Linear Datamodeling Score, and offering a range of discussions.
\end{abstract}
\section{Introduction}\label{sec:intro}
Unraveling the intricate connections between data and model predictions is critical in machine learning, particularly in high-stakes decision-making contexts such as healthcare, economics, and public policy~\citep{bracke2019machine, Rudin2019stop, amarasinghe2023explainable}. A better understanding of these connections allows tackling tasks like data cleaning~\citep{teso2021interactive}, model debugging~\citep{guo2021fastif}, and assessing the robustness of inferential results~\citep{broderick2020automatic}, all key to enhancing model interpretability and fostering trust between machine learning practitioners and domain experts. Among the various methodologies, the influence function adopted by \citet{koh2017understanding} stands out as a particularly effective tool, sparking extensive research into identifying influential individual samples~\citep{barshan20relatif, schioppa2022scaling, grosse2023studying}.

Nevertheless, focusing solely on the influence of individual samples is often insufficient. In many scenarios, it is necessary to understand how sets of samples jointly affect model predictions. These include uncovering biases associated with specific demographic groups~\citep{chen2018why}, fairly allocating credits among crowdworkers~\citep{arrieta2018should}, and detecting trends and signals that emerge collectively within the data~\citep{yang2020smnn}. Gaining such insights is crucial for a more comprehensive understanding of model behaviors.

In pursuit of advancing this field, in this paper, we delve into the most influential subset selection (MISS) problem~\citep{fisher2023influence}. MISS attempts to find a set of samples that, when removed from the training set, results in the most significant change of a pre-defined target function. In essence, it measures the \emph{worst-case} collective influence.

\paragraph{Contributions.}
We provide a comprehensive analysis of existing algorithms to tackle MISS, revealing their weaknesses and strengths, and discussing the challenges and important considerations for future research. To summarize our contributions:
\begin{itemize}[leftmargin=*]
    \item We systematically study the failure modes of \emph{influence-based greedy heuristics}, a dominant class of algorithms in MISS that assign a static score to each sample and subsequently perform a greedy selection.
          Specifically, the error of influence function, as well as the inability to incorporate the non-additive structure of the collective influence, can cause these heuristics to fail in MISS even in simple linear regression.

    \item In contrast, we demonstrate the effectiveness of the \emph{adaptive greedy algorithm} that dynamically updates the score for each remaining sample in response to selections already made. The improvement mainly comes from its ability to capture the nuanced interactions among samples.

    \item We conduct experiments on both synthetic and real-world datasets. The experimental results not only corroborate the theoretical findings but also extend to more complex settings including classification tasks and non-linear models, showcasing the consistent benefits of adaptivity.

    \item We discuss the inherent trade-offs between performance and efficiency in MISS, and the potential drawbacks of additive metrics such as Linear Datamodeling Score, among others.
\end{itemize}

\paragraph{Concurrent work.} We acknowledge a concurrent work~\citep{huang2024approximations}, which was posted around the same time as ours. \citet{huang2024approximations} investigate the Maximum Influence Perturbation problem~\citep{broderick2020automatic}, which is equivalent to MISS. Both studies analyze the additive assumption and the adaptive greedy algorithm in OLS, but they differ in the theoretical results. Notably, we formally prove the failure of LAGS in solving MISS under a specific data generation process, uncovering the phenomena of amplification and cancellation. \citet{huang2024approximations} analyze the approximation error of variants of LAGS by comparing the closed-form expression of the approximate algorithm and the actual effect.
\section{Preliminaries}\label{sec:prelim}
\subsection{Problem statement}\label{subsec:problem_statement}
Consider a prediction task (e.g., regression or classification) with an input space \(\mathcal{X} \subset \mathbb{R}^d\) and a target space \(\mathcal{Y} \subset \mathbb{R}\). The prediction task aims to learn a function \(f(\theta, \cdot): \mathcal{X} \to \mathcal{Y}\)  parameterized by \(\theta \in \mathbb{R}^q\). Specifically, denote \(\{(x_i, y_i)\}_{i=1}^n\) as the training samples and \(L(\cdot, \cdot)\) as the loss function (e.g., squared error or cross-entropy), we aim to solve the following optimization problem:
\begin{align}
    \hat\theta = \argmin_{\theta \in \mathbb{R}^q} \frac{1}{n}\sum_{i=1}^n L(f(\theta, x_i), y_i).
\end{align}
A key notion for analyzing the influential samples is the optimal model parameters after removing a subset of training samples. Denote \([n] = \{1,2,\cdots, n\}\) and the set of indices as \(S \subset [n]\), this corresponds to
\begin{align}
    \hat\theta_{-S} = \argmin_{\theta \in \mathbb{R}^q} \frac{1}{n}\sum_{i\notin S} L(f(\theta, x_i), y_i).\label{eq:opt_remove}
\end{align}
Note that we do not adjust the normalizing constant as it does not affect the optimal solution to \Cref{eq:opt_remove}.
Finally, denote \(\phi: \mathbb{R}^q \to \mathbb{R}\) as the \emph{target function}, which takes the model parameters as input and returns a quantity of interest (e.g., the prediction on a test sample or the sign of its first coefficient). We now formally define the most influential subset selection problem.

\begin{definition}[Most Influential Subset Selection (MISS)]
    Given a positive integer \(k \ll n\), the \emph{\(k\)-Most Influential Subset Selection (\(k\)-MISS)} problem refers to this discrete optimization problem:
    \begin{align}
        S_{\emph{opt},k} = \argmax_{S \subset [n], |S| \leq k} A_{-S}, \text{ where }
        A_{-S} \coloneqq \phi(\hat\theta_{-S}) - \phi(\hat\theta).
    \end{align}
\end{definition}

We refer to \(A_{-S}\) as the \emph{actual effect} of removing \(S\). For clarity, we refer to the actual effect as the \emph{individual effect} when \(|S|=1\) and the \emph{group effect} otherwise. Essentially, MISS aims to identify a subset with bounded size, such that its removal from the training samples will lead to the maximum actual effect. It can be viewed as analogous to adversarial examples~\citep{biggio2013evasion,szegedy2014intriguing}, in that both characterize the alteration of model behaviors in the \emph{worst case}, but MISS operates on the training data space and during training time.

Unfortunately, the naive approach of enumerating all possible subsets has an exponential time complexity in \(k\), rendering it computationally intractable in practice. In fact, even in the context of linear regression, a variant of MISS (where the target function depends on \(S\)) known as \emph{robust regression}~\citep{andersen2007modern} is proved to be NP-hard~\citep{price2022hardness}. To tackle this challenge, researchers have proposed various greedy heuristics to select an \emph{approximately} most influential subset.

\subsection{Influence-based greedy heuristics}\label{subsec:greedy_heuristics}
One of the most prominent algorithms for MISS, ZAMinfluence, was introduced by \citet{broderick2020automatic} and applied to assess the robustness of inferential results in earlier econometric studies~\citep{attanasio2015impacts, angelucci2015microcredit}. It builds upon the classic influence function~\citep{koh2017understanding} from robust statistics literature~\citep{hampel1974influence, hampel2005robust}, extending its application from individual samples to a set of samples. A similar approach has been employed by \citet{koh2019accuracy} to estimate group effects. We defer a detailed review of the literature to \Cref{sec:related}.

\begin{definition}[Upweighted objective]
    We denote the optimal solution to the upweighted objective w.r.t.\ a set of indices \(S\) as
    \begin{align}
        \hat\theta_{-S}(\delta) \coloneqq \argmin_{\theta \in \mathbb{R}^q} \frac{1}{n}\sum_{i=1}^n L(f(\theta, x_i), y_i) + \delta \sum_{i\in S}L(f(\theta, x_i), y_i).
    \end{align}
\end{definition}

It is straightforward to see that \(\delta=0\) corresponds to \(\hat\theta\), while \(\delta=-\frac{1}{n}\) corresponds to \(\hat\theta_{-S}\).
Similar to the influence function of individual samples~\citep{koh2017understanding}, the influence of a set \(S\) can be characterized by the local perturbation of \(\hat\theta_{-S}(\delta)\) around \(\delta=0\). This quantity is well-defined when \(L\) is strictly convex
and can be computed via the Implicit Function Theorem~\citep{krantz2002implicit}.

\begin{definition}[Influence function of a set]
    The influence of upweighting \(S\) on the parameters is:
    \begin{align}
        \mathcal{I}(S) \coloneqq\frac{\mathrm d \hat\theta_{-S}(\delta)}{\mathrm d \delta} \bigg|_{\delta=0} = -H_{\hat\theta}^{-1}\sum_{i\in S}\nabla_\theta L(f(\hat \theta, x_i), y_i),\label{eq:ifset}
    \end{align}
    where \(H_{\hat\theta} = \frac{1}{n}\sum_{i=1}^n \nabla_\theta^2 L(f(\hat\theta, x_i), y_i)\) is the Hessian of the loss function at \(\hat\theta\).
\end{definition}

Using the chain rule {and note that $\hat{\theta}_{-S} = \hat{\theta}_{-S}(-\frac{1}{n})$}, the actual effect can be estimated via the first-order approximation:
\begin{align}
    A_{-S} \approx -\frac{1}{n}\cdot  \frac{\mathrm d \phi(\hat\theta_{-S}(\delta))}{\mathrm d \delta} \bigg|_{\delta=0} = \frac{1}{n}\nabla_\theta \phi(\hat\theta)^\top H_{\hat\theta}^{-1}\sum_{i\in S}\nabla_\theta L(f(\hat \theta, x_i), y_i).\label{eq:fo}
\end{align}
The key observation is that the right-hand side of \Cref{eq:fo} displays an \emph{additive} structure so that the group effect can be approximated by a summation of individual influences. This naturally yields the ZAMinfluence algorithm, which involves
\begin{enumerate*}[label=\arabic*)]
    \item calculating \(v_i = \nabla_\theta \phi(\hat\theta)^\top H_{\hat\theta}^{-1}\nabla_\theta L(f(\hat\theta, x_i), y_i)\) for each \(i\in[n]\);
    \item sorting \(v_i\)'s;
    \item returning the top \(i\)'s with positive \(v_i\).
\end{enumerate*}
In fact, a series of studies in MISS~\citep{wang2023farewell, yang2023many, chhabra2024what} follow a similar approach: they score individual samples using variants of influence functions, and then greedily select those with the highest positive scores. We refer to these algorithms as \emph{influence-based greedy heuristics}.

These heuristics are powerful in two aspects. The first is their broad applicability: they can be applied to \emph{any} \(Z\)-estimator of a twice-differentiable objective function~\citep{broderick2020automatic} to obtain an influential subset w.r.t.\ \emph{any} differentiable target function. The second is their computational efficiency: once we have computed the scores for each sample, they can be executed in linear to log-linear time complexity. However, a major drawback of these heuristics is the lack of \emph{provable} guarantees. It is well-known that even the influence estimates of individual samples can be fragile and erroneous, especially in complex models like neural networks~\citep{basu2021influence, bae2022if}. A more significant concern lies in the additivity assumption implicitly adopted by these heuristics (also see \citet{guu2023simfluence} for discussions), as it fails to account for the interactions among samples. We critically examine these issues in \Cref{sec:pitfalls}.
\section{Pitfalls of greedy heuristics in Most Influential Subset Selection}\label{sec:pitfalls}
In this section, we delve into the influence-based greedy heuristics introduced in \Cref{sec:prelim}, providing a comprehensive study of their limitations in solving MISS within the context of linear regression.

\paragraph{Setup and notation.}
In standard linear regression, each \(x_i \in \mathbb{R}^d\) represents a vector of covariates, and \(y_i\) stands for a real-valued label. The first coordinate of each \(x_i\) is set to \(1\) to account for the intercept term. We stack the row vectors \(x_i^\top\) to form the design matrix \(X \in \mathbb{R}^{n \times d}\) and concatenate the \(y_i\)'s into the target vector \(y \in \mathbb{R}^n\). We assume the labels are generated as follows: there exists a \(\theta^* \in \mathbb{R}^{d}\) (note \(q=d\)), a noise parameter \(\varepsilon>0\) and some \(p\), such that
\begin{align}
    e = (\varepsilon, {0, \cdots, 0}, p\varepsilon)^\top \in \mathbb{R}^n,\quad y = X\theta^* - e.\label{eq:label_gen}
\end{align}
For a subset \(S\), \(X_S\) and \(y_S\) denote the corresponding covariates and responses, while \(X_{-S}\) and \(y_{-S}\) represent their complements. To ensure the uniqueness of the optimal solution, we assume \(N = X^{\top}X\) is invertible, and that \(\sum_{i=2}^{n-1} x_ix_i^\top\) is also invertible (when this assumption is violated, our results naturally extend to ridge regression). The hat matrix is denoted as \(H = XN^{-1}X^{\top}\). The diagonal element \(h_{ii}\) of \(H\) represents the \emph{leverage score} of \(x_i\), and the off-diagonal element \(h_{ij}\) represents the \emph{cross-leverage score}~\citep{chatterjee2009sensitivity} between \(x_i\) and \(x_j\). The Ordinary Least Squares (OLS) estimator is given by
\begin{align}
    \hat\theta = \argmin_\theta \frac{1}{n} \|X\theta-y\|^2 = N^{-1}\sum_{i=1}^n x_iy_i.\label{eq:ols}
\end{align}
Let \(\hat y_i=x_i^{\top}\hat{\theta}\) be the prediction and \(r_i = \hat y_i - y_i\) be the negative residual for the \(i\)-th sample. Throughout \Cref{sec:pitfalls,sec:promises}, we focus on the linear target function \(\phi(\theta) = x_{\text{test}}^{\top}\theta\) for \(x_{\text{test}} = \frac{x_1 + px_n}{p+1}\), whose first coordinate is also \(1\). This choice of \(x_{\text{test}}\) is intentional: it greatly simplifies the analysis by making most of the individual effects negative, as reflected in \Cref{fig:IF,fig:AE,fig:CL} and the calculations in \Cref{adxsubsec:prep_pitfall}. Furthermore, due to the continuous nature of the problem, our conclusions hold for a set of \(x_{\text{test}}\) with non-zero Lebesgue measure.

\subsection{Influence function is not accurate (even) in linear models}
\begin{wrapfigure}[19]{r}{0.55\textwidth}
    \centering
    \vspace{-1\intextsep}
    \includegraphics[width=\linewidth]{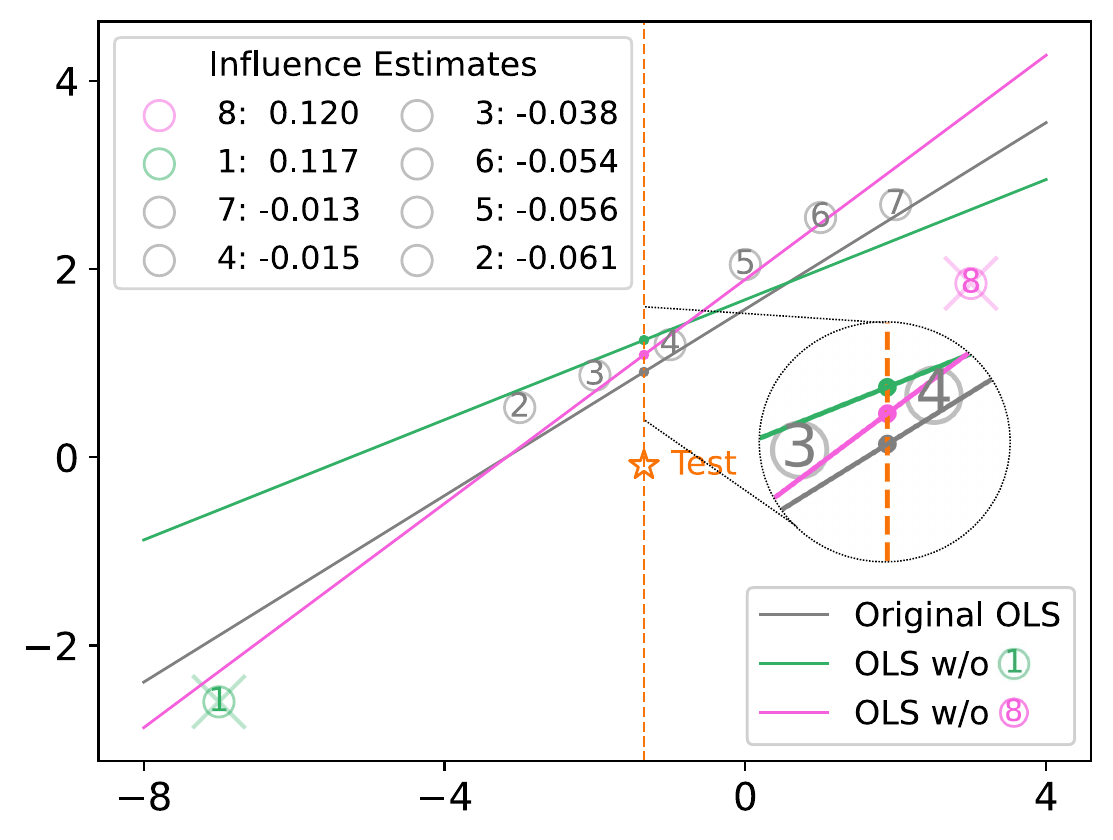}
    \caption{Influence estimates suffer from disparate levels of under-estimation, leading to the failure of \(1\)-MISS}
    \label{fig:IF}
\end{wrapfigure}
Influence function is widely acknowledged as an accurate alternative of leave-one-out re-training in linear models~\citep{koh2017understanding, basu2021influence, bae2022if}. In this section, however, we challenge this viewpoint by pointing out a previously overlooked fact: the influence function fails to incorporate the leverage scores of individual samples in linear regression, which could result in its failure in selecting the most influential sample (i.e., \(1\)-MISS).

Plugging the squared loss into \Cref{eq:ifset}, we have \(\mathcal{I}(S) = -n N^{-1}\sum_{i \in S} x_i r_i\). Therefore, ZAMinfluence assigns \(v_i = x_{\text{test}}^\top N^{-1} x_i r_i\) to each sample. We refer to them as \emph{influence estimates}. On the other hand, it is well-known in the statistics literature~\citep{beckman1974distribution,cook1977detection} that
\begin{align}
    \hat\theta_{-\{i\}} - \hat\theta = \frac{N^{-1}x_ir_i}{1-h_{ii}}.
\end{align}
Consequently, the change in the target function is given by \(A_{-\{i\}} = \frac{x_{\text{test}}^\top N^{-1} x_i r_i}{1-h_{ii}}\), which deviates from the influence estimate by a factor of \(1/(1-h_{ii})\) and implies under-estimation (a phenomenon which was also reported in \citet{koh2019accuracy}). This is particularly concerning when a sample has a high leverage score (e.g., an outlier~\citep{chatterjee1986influential}): in this case, the influence function substantially under-estimates the individual effect, potentially leading to the failure of \(1\)-MISS. We illustrate this intuition in \Cref{fig:IF}: while point {\textcolor{mypink}{\small\numcircledmod{8}}} is scored highest by the influence function, it is however removing point {\textcolor{mygreen}{\small \numcircledmod{1}}} (which has the highest leverage score) that leads to the greatest change in the prediction on the test sample. More generally, we present the following theorem illustrating the failure of ZAMinfluence in \(1\)-MISS, with the proof detailed in \Cref{adxsubsec:thm:influence}.

\begin{theorem}\label{thm:influence}
    Assume \(h_{11}>h_{nn}\). Under the label generation process described in \Cref{eq:label_gen}, there exists some \(p\), such that ZAMinfluence fails to select the most influential sample.
\end{theorem}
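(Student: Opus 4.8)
The plan is to collapse $1$-MISS to a comparison between just samples $1$ and $n$, and then tune $p$ into a ``disagreement window'' where the influence ranking and the leave-one-out ranking part ways. First I would compute the residuals in closed form. From $\hat\theta = \theta^* - N^{-1}X^\top e$ and $X^\top e = \varepsilon x_1 + p\varepsilon x_n = \varepsilon(p+1)\,x_{\text{test}}$, the negative residual of sample $i$ is $r_i = e_i - \varepsilon(p+1)\,x_i^\top N^{-1}x_{\text{test}}$. Setting $g_i \coloneqq x_i^\top N^{-1}x_{\text{test}} = \tfrac{h_{i1}+p\,h_{in}}{p+1}$, this yields the influence estimate $v_i = g_i r_i$ and, by the leave-one-out identity for $A_{-\{i\}}$ recorded above, the actual effect $A_{-\{i\}} = v_i/(1-h_{ii})$. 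Since $e_i = 0$ for every $i\notin\{1,n\}$, both $v_i$ and $A_{-\{i\}}$ are $\le 0$ for such $i$, so both the subset returned by ZAMinfluence and $S_{\text{opt},1}$ lie in $\{1,n\}$ once one of these two scores is positive; the whole question reduces to comparing $1$ and $n$.

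I would then isolate the failure condition. Because $h_{11}>h_{nn}$, the amplification factors obey $\tfrac{1}{1-h_{11}}>\tfrac{1}{1-h_{nn}}>0$, so, whenever $v_1,v_n>0$, ZAMinfluence prefers $n$ to $1$ iff $v_1<v_n$, whereas $A_{-\{1\}}>A_{-\{n\}}$ iff $v_1/v_n>\tfrac{1-h_{11}}{1-h_{nn}}$. The interval $\bigl(\tfrac{1-h_{11}}{1-h_{nn}},\,1\bigr)$ is nonempty \emph{exactly} because $h_{11}>h_{nn}$, and any $p$ with $v_1(p),v_n(p)>0$ and $v_1(p)/v_n(p)$ inside it forces ZAMinfluence to output $\{n\}$ while $S_{\text{opt},1}=\{1\}$ (the other samples being dominated), i.e., a failure.

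The remaining task is to exhibit such a $p$. Using $v_1(p)=\tfrac{\varepsilon}{p+1}a(p)(1-a(p))$ and $v_n(p)=\tfrac{\varepsilon}{p+1}b(p)c(p)$ with $a(p)=h_{11}+p\,h_{1n}$, $b(p)=h_{1n}+p\,h_{nn}$, $c(p)=p(1-h_{nn})-h_{1n}$, I would locate an interval $I=(p_L,p_R)$ of positive $p$'s on which $v_1,v_n>0$, with $v_1-v_n>0$ near $p_L$ and $v_1-v_n<0$ near $p_R$ --- taking $p_L$ as the first $p$ where $v_n$ vanishes (a zero of $b$ or $c$) and $p_R$ as the first $p>p_L$ where $v_1$ vanishes ($a=0$ or $a=1$). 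That $p_L<p_R$, and that the relevant scores stay positive on $I$, I would get from the inequalities $h_{1n}^2\le h_{11}h_{nn}$ and $h_{1n}^2\le(1-h_{11})(1-h_{nn})$, both immediate from positive-semidefiniteness of the $\{1,n\}$ principal blocks of $H$ and of $I-H$. Taking $\bar p$ to be the largest zero of $v_1-v_n$ in $I$ gives $v_1(\bar p)=v_n(\bar p)>0$, hence $A_{-\{1\}}(\bar p)>A_{-\{n\}}(\bar p)$; by continuity in $p$ this strict inequality, along with $v_1,v_n>0$, persists on a right-neighborhood of $\bar p$, where moreover $v_n>v_1$ since $\bar p$ is the last crossing before $p_R$. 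Any such $p$ is the witness.

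The step I expect to be the genuine obstacle is this last one: the cross-leverage $h_{1n}$ can be of either sign, so the endpoints $p_L,p_R$ and the monotone behavior of $v_1-v_n$ between them depend on $\sgn(h_{1n})$ and call for a short case split; and the nonemptiness of $I$ relies on the \emph{strict} versions of the hat-matrix inequalities, so the boundary designs with $h_{1n}^2\in\{h_{11}h_{nn},\,(1-h_{11})(1-h_{nn})\}$ --- a measure-zero family --- would need either a separate argument or a genericity assumption. The other ingredients --- the residual computation, the reduction to $\{1,n\}$, and the identification of the disagreement window --- are routine bookkeeping once set up.
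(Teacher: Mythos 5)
Your proposal is correct and follows essentially the same route as the paper's proof: reduce to comparing samples $1$ and $n$ (all other scores being nonpositive), aim the ratio $v_1/v_n$ into the disagreement window $\bigl(\tfrac{1-h_{11}}{1-h_{nn}},1\bigr)$, and realize this via a sign-of-$h_{1n}$ case split with a continuity/intermediate-value argument on an interval of $p$ where $v_1,v_n>0$ — the paper's only packaging difference is that it applies the intermediate value theorem directly to $v_n/v_1$, which sweeps $(0,\infty)$ on each such interval, rather than tracking the last zero of $v_1-v_n$. The one loose end you flag, strictness of $h_{1n}^2<h_{11}h_{nn}$ and $h_{1n}^2<(1-h_{11})(1-h_{nn})$, needs no genericity assumption: the first is strict Cauchy--Schwarz, since $h_{11}>h_{nn}$ forces $x_1\neq x_n$ and the shared intercept coordinate then rules out $x_1\parallel x_n$; the second follows from the setup's standing assumption that $\sum_{i=2}^{n-1}x_ix_i^\top$ is invertible, via a Schur-complement argument showing $\begin{pmatrix}1-h_{11} & -h_{1n}\\ -h_{1n} & 1-h_{nn}\end{pmatrix}\succ 0$. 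With that substitution your argument is complete.
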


\begin{takeaway}
    Even when the influence estimates have high \emph{correlation} with the individual effects, they can be misleading for extreme samples. As a result, the influence function may not be a reliable tool for MISS.
\end{takeaway}

\subsection{Violation of the additivity assumption: amplification and cancellation}\label{subsec:additivity}
Note that the individual effects \(A_{-\{i\}}\)'s can be computed efficiently for linear regression (this is generally infeasible for more complicated tasks) by correcting the influence estimates \(v_i\)'s with their corresponding leverage scores. Hence, a natural alternative is to directly perform greedy selection based on the \(A_{-\{i\}}\)'s. We refer to this method as \emph{Leverage-Adjusted Greedy Selection} (LAGS). Nevertheless, we will illustrate in this section that even with perfect individual influence estimation, LAGS may still fall short in MISS due to violations of the additivity assumption.

We start by computing the closed-form of \(A_{-S}\). The proof can be found in \Cref{adxsubsec:prop_exact}.
\begin{proposition}\label{prop:exact}
    For any set of indices \(S\), we have
    \begin{align}
        A_{-S} \coloneqq \phi(\hat\theta_{-S})-\phi(\hat\theta) = x_{\emph{test}}^{\top}N^{-1}X_S^{\top}\left(I_k-X_SN^{-1}X_S^{\top}\right)^{-1}(X_S\hat\theta-y_S).\label{eq:exact}
    \end{align}
\end{proposition}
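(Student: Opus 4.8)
The plan is to obtain a closed form for $\hat\theta_{-S}-\hat\theta$ and then read off $A_{-S}=\phi(\hat\theta_{-S})-\phi(\hat\theta)=x_{\text{test}}^\top(\hat\theta_{-S}-\hat\theta)$. First I would write both estimators through their normal equations: $\hat\theta=N^{-1}X^\top y$ as in \Cref{eq:ols}, and, using $X_{-S}^\top X_{-S}=N-X_S^\top X_S$ together with $X_{-S}^\top y_{-S}=X^\top y-X_S^\top y_S$,
\[
    \hat\theta_{-S}=\left(N-X_S^\top X_S\right)^{-1}\left(X^\top y-X_S^\top y_S\right).
\]
This is well defined precisely when $N-X_S^\top X_S$ is invertible, which is also the condition under which $\hat\theta_{-S}$ in \Cref{eq:opt_remove} is the unique minimizer; for the subsets relevant to the analysis, namely those containing the indices $1$ and $n$, it is guaranteed by the assumption that $\sum_{i=2}^{n-1}x_ix_i^\top$ is invertible, and in general it is equivalent to invertibility of $I_k-X_SN^{-1}X_S^\top$ through the matrix determinant lemma $\det(I_k-X_SN^{-1}X_S^\top)=\det(N-X_S^\top X_S)/\det(N)$.

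Next I would apply the Woodbury identity with $A=N$, $U=X_S^\top$, $C=-I_k$, $V=X_S$, which yields
\[
    \left(N-X_S^\top X_S\right)^{-1}=N^{-1}+N^{-1}X_S^\top M X_S N^{-1},\qquad M\coloneqq\left(I_k-X_SN^{-1}X_S^\top\right)^{-1}.
\]
Substituting this into the expression for $\hat\theta_{-S}$, expanding the four resulting terms, and grouping with $\hat\theta=N^{-1}X^\top y$, the whole computation collapses by repeated use of the identity $X_SN^{-1}X_S^\top=I_k-M^{-1}$: it rewrites $MX_SN^{-1}X_S^\top$ as $M-I_k$, after which the terms involving $N^{-1}X_S^\top y_S$ cancel. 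One is left with $\hat\theta_{-S}-\hat\theta=N^{-1}X_S^\top M(X_S\hat\theta-y_S)$, and left-multiplying by $x_{\text{test}}^\top$ gives exactly \Cref{eq:exact}. As a consistency check, $X_S\hat\theta-y_S=(r_i)_{i\in S}$ is the vector of negative residuals restricted to $S$, so for $|S|=1$ this recovers $A_{-\{i\}}=x_{\text{test}}^\top N^{-1}x_ir_i/(1-h_{ii})$.

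The only real difficulty here is bookkeeping in the algebraic collapse — keeping straight which factors live in $\mathbb{R}^{k\times k}$ versus $\mathbb{R}^{d\times d}$ and inserting $X_SN^{-1}X_S^\top=I_k-M^{-1}$ in the right places — together with the minor technical point of justifying invertibility of $I_k-X_SN^{-1}X_S^\top$; neither step is conceptually deep. A route that sidesteps Woodbury altogether is to treat the claimed formula as an ansatz and verify directly that $\theta^\sharp\coloneqq\hat\theta+N^{-1}X_S^\top M(X_S\hat\theta-y_S)$ solves the reduced normal equations $(N-X_S^\top X_S)\theta=X^\top y-X_S^\top y_S$; this is again a short check using $X_SN^{-1}X_S^\top=I_k-M^{-1}$, and uniqueness of the OLS solution then forces $\theta^\sharp=\hat\theta_{-S}$, from which \Cref{eq:exact} follows by applying $x_{\text{test}}^\top$.
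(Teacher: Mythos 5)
Your proposal is correct and follows essentially the same route as the paper's own proof: apply the Woodbury identity to \((N-X_S^\top X_S)^{-1}\), expand against \(X^\top y - X_S^\top y_S\), collapse the terms to obtain \(\hat\theta_{-S}-\hat\theta = N^{-1}X_S^\top(I_k-X_SN^{-1}X_S^\top)^{-1}(X_S\hat\theta-y_S)\), and then apply the linear target \(x_{\text{test}}^\top\). Your added remarks on the invertibility of \(I_k-X_SN^{-1}X_S^\top\) and the alternative ansatz-verification route are sound but not needed beyond what the paper does.
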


\begin{remark}\label{remark:series}
    Denote \(M_S = X_SN^{-1}X_S^{\top}\). It is straightforward to see that replacing the Neumann series \((I_k-M_S)^{-1} = I_k + M_S + M_S^2 + \cdots\) by the identity matrix yields the influence estimates, i.e., the first-order approximation. We further prove in \Cref{adxsubsec:correspondence} that there is a one-to-one correspondence between the Taylor series of \(\hat\theta_{-S}(\delta)\) and the Neumann series: for any \(k\in\mathbb{N}^+\), the \(k\)-th order approximation of \(\hat\theta_{-S}(\delta)\) is equivalent to truncating the Neumann series at \(M_S^{k-1}\). On the other hand, LAGS is based on the diagonal approximation of \((I_k-M_S)\).
\end{remark}
To systematically study the failure mode of LAGS, we consider \(S = \{i,j\}\). In this case,
\begin{align}
    A_{-\{i,j\}}
     & = x_{\text{test}}^\top\left(\frac{(1-h_{jj})N^{-1}x_ir_i + (1-h_{ii})N^{-1}x_jr_j + h_{ij}N^{-1}(x_ir_j + x_jr_i)}{(1-h_{ii})(1-h_{jj})-h_{ij}^2}\right) \notag \\
     & = \frac{(1-h_{ii})(1-h_{jj})(A_{-\{i\}}+A_{-\{j\}})+h_{ij}x_{\text{test}}^\top N^{-1}(x_ir_j+x_jr_i)}{(1-h_{ii})(1-h_{jj})-h_{ij}^2}.\label{eq:two}
\end{align}
From \Cref{eq:two}, we identify two primary factors contributing to the non-additivity of the group effect: the cross-leverage score \(h_{ij}\) in the denominator, which can lead to \emph{super-additivity} by inflating the sum of individual effects, and the cross terms \(x_{\text{test}}^\top N^{-1}(x_ir_j+x_jr_i)\) in the numerator, which may result in \emph{sub-additivity} through the neutralization of individual effects. We refer to these phenomena as ``amplification'' and ``cancellation,'' respectively, and will delve into how they provably lead to the failure of LAGS in what follows.

\paragraph{Amplification.}
Amplification occurs when the group effect of a set substantially exceeds the sum of individual effects. As suggested by \Cref{eq:two}, this phenomenon is pronounced when the cross-leverage score is high. Therefore, we focus on scenarios where there are \(c \geq 2\) identical copies of a sample, in which case the cross-leverage score becomes the leverage score. Intuitively, this setting can be generalized to a cluster of similar samples. We first prove a useful result in this context.
\begin{proposition}\label{prop:amp}
    Suppose there are \(c\) copies of \((x_i, y_i)\). We have
    \begin{align}
        \frac{A_{-\{i\}^c}}{A_{-\{i\}}} = \frac{c \cdot (1-h_{ii})}{1-ch_{ii}} > c,
    \end{align}
    where \(A_{-\{i\}^c}\) denotes the group effect of removing all \(c\) copies of \((x_i, y_i)\).
\end{proposition}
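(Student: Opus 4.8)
The plan is to specialize the closed-form expression from \Cref{prop:exact} to the case $S = \{i\}^c$ and simplify with the Sherman--Morrison formula. First I would observe that when $S$ consists of $c$ identical copies of $(x_i, y_i)$, the matrix $X_S \in \mathbb{R}^{c\times d}$ equals $\mathbf{1}_c x_i^\top$, where $\mathbf{1}_c$ is the all-ones vector in $\mathbb{R}^c$, and $y_S = y_i\,\mathbf{1}_c$. Consequently $M_S \coloneqq X_S N^{-1} X_S^\top = h_{ii}\,\mathbf{1}_c\mathbf{1}_c^\top$ is rank one, and $X_S\hat\theta - y_S = (x_i^\top\hat\theta - y_i)\mathbf{1}_c = r_i\,\mathbf{1}_c$.

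Next I would invert $I_c - M_S = I_c - h_{ii}\,\mathbf{1}_c\mathbf{1}_c^\top$ via Sherman--Morrison, obtaining $(I_c - M_S)^{-1} = I_c + \frac{h_{ii}}{1 - c h_{ii}}\,\mathbf{1}_c\mathbf{1}_c^\top$. This step requires $1 - c h_{ii} \neq 0$, which is precisely the well-posedness condition guaranteeing $\hat\theta_{-\{i\}^c}$ is defined (equivalently, $X_{-S}^\top X_{-S}$ invertible); combined with $0 \preceq M_S \preceq I$ this forces $c h_{ii} < 1$. Substituting into \Cref{eq:exact}, the vector sandwich collapses to the scalar $\mathbf{1}_c^\top (I_c - M_S)^{-1}\mathbf{1}_c = c + \frac{h_{ii} c^2}{1 - c h_{ii}} = \frac{c}{1 - c h_{ii}}$, so that $A_{-\{i\}^c} = \frac{c}{1 - c h_{ii}}\, x_{\text{test}}^\top N^{-1} x_i r_i$.

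Finally I would divide by the single-sample expression $A_{-\{i\}} = \frac{x_{\text{test}}^\top N^{-1} x_i r_i}{1 - h_{ii}}$ recalled earlier in the text to get $\frac{A_{-\{i\}^c}}{A_{-\{i\}}} = \frac{c(1 - h_{ii})}{1 - c h_{ii}}$. The strict inequality $> c$ then follows because $h_{ii} = x_i^\top N^{-1} x_i > 0$ (as $N^{-1} \succ 0$ and $x_i \neq 0$, its first coordinate being $1$) and $c \geq 2$ imply $1 - c h_{ii} < 1 - h_{ii}$, with both quantities positive.

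I do not expect a genuine obstacle: the argument is a short computation once the right substitution is made. The only points needing care are (i) justifying $c h_{ii} < 1$ so that both the Sherman--Morrison step and the ratio are legitimate, and (ii) noting implicitly that $A_{-\{i\}} \neq 0$ for the ratio to be meaningful, which holds whenever $x_{\text{test}}^\top N^{-1} x_i r_i \neq 0$. An alternative route that removes one copy at a time and tracks the induced change in the leverage score also works, but it is messier, so I would favor the direct substitution into \Cref{prop:exact}.
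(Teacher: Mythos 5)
Your proposal is correct and amounts to essentially the same computation as the paper's proof: both reduce to the closed form \(A_{-\{i\}^c} = \frac{c\,x_{\text{test}}^\top N^{-1}x_i r_i}{1-ch_{ii}}\) via a rank-one update identity, the only cosmetic difference being that the paper applies Sherman--Morrison directly to \((N-cx_ix_i^\top)^{-1}\) whereas you specialize \Cref{prop:exact} and invert the \(c\times c\) matrix \(I_c - h_{ii}\mathbf{1}_c\mathbf{1}_c^\top\). Your justification that \(ch_{ii}<1\) (invertibility of \(X_{-S}^\top X_{-S}\) together with \(M_S \preceq I\)) plays the same role as the paper's \Cref{lemma:ls}, so there is no gap.
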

\begin{wrapfigure}[18]{r}{0.55\textwidth}
    \centering
    \vspace{-1\intextsep}
    \includegraphics[width=\linewidth]{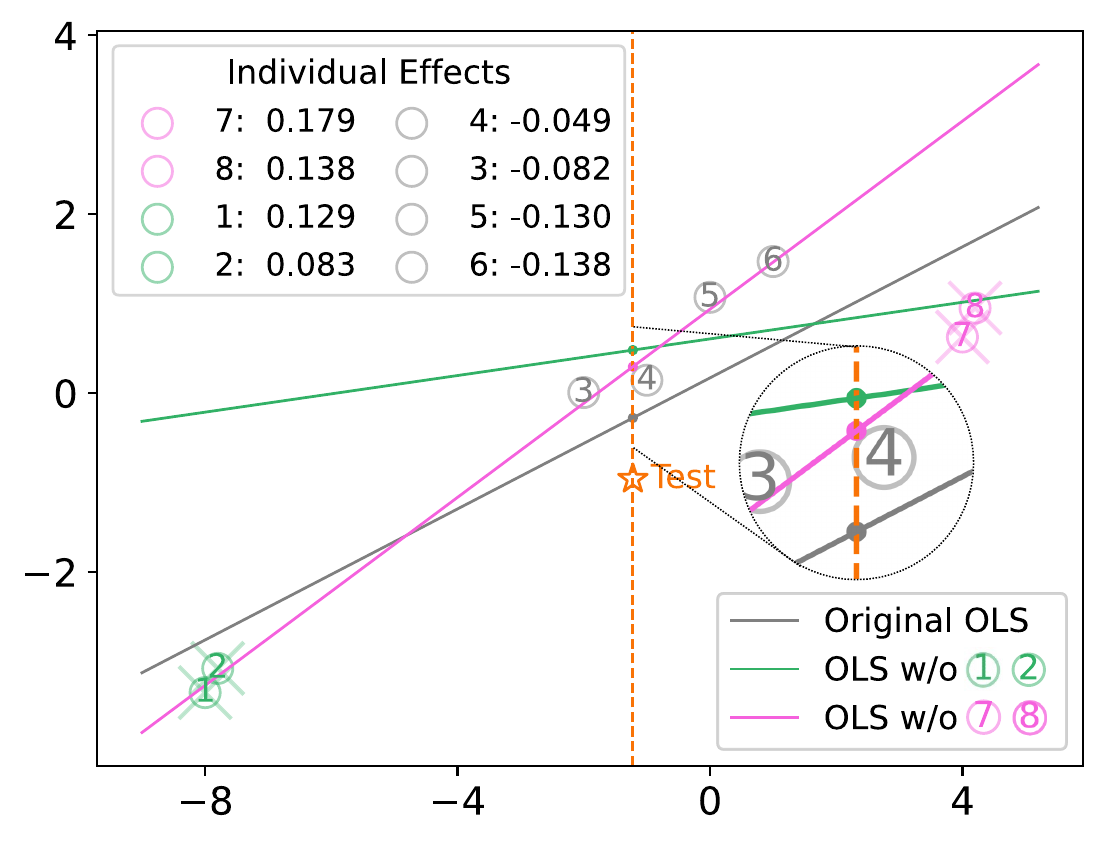}
    \caption{LAGS fails in \(2\)-MISS due to amplification}
    \vspace{-1\intextsep}
    \label{fig:AE}
\end{wrapfigure}
The proof can be found in \Cref{adxsubsec:prop_amp}. It suggests that the group effect not only surpasses the sum of individual effects, but their ratio can be unbounded as \(h_{ii} \to \frac{1}{c}\). Put differently, a sample with minor influence can collectively cause a substantial effect when grouped with similar ones. In MISS, this could lead to the failure of LAGS when there is a cluster of samples with high leverage scores yet do not have the largest individual effects. This intuition is illustrated in \Cref{fig:AE}: while points {\textcolor{mypink}{\small\numcircledmod{7}}} and {\textcolor{mypink}{\small\numcircledmod{8}}} (the pink cluster) have the highest individual effects due to their large residuals, points {\textcolor{mygreen}{\small\numcircledmod{1}}} and {\textcolor{mygreen}{\small\numcircledmod{2}}} (the green cluster) with high leverage scores constitute the most influential size-\(2\) subset.

We show a generalization of this example in the following theorem and defer its proof to \Cref{adxsubsec:thm:amp}.

\begin{theorem}\label{thm:amp}
    Suppose there are \(c\) copies of \((x_1, y_1)\) and \((x_n, y_n)\), and that \(h_{11}>h_{nn}\). Under the label generation process described in \Cref{eq:label_gen}, there exists some \(p\), such that LAGS fails in \(c\)-MISS.
\end{theorem}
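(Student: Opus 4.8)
The plan is to reduce Theorem~\ref{thm:amp} to a direct comparison of two candidate size-$c$ subsets, much as in the proof of Theorem~\ref{thm:influence}, but now using the amplification formula of Proposition~\ref{prop:amp} in place of the leverage-adjustment factor. Since there are $c$ copies of $(x_1,y_1)$ and $c$ copies of $(x_n,y_n)$, the natural candidates for LAGS are: (i) the ``green'' set consisting of all $c$ copies of sample $1$, and (ii) the ``pink'' set consisting of all $c$ copies of sample $n$; any mixed set will be dominated by one of these two extremes once we show that copies of the same point are the greedy-optimal grouping. First I would record, via the preparatory calculations referenced in \Cref{adxsubsec:prep_pitfall}, the closed forms of $A_{-\{1\}}$, $A_{-\{n\}}$, the leverage scores $h_{11}>h_{nn}$, and the cross-leverage $h_{1n}$ under the label model $y = X\theta^\star - e$ with $e=(\varepsilon,0,\dots,0,p\varepsilon)^\top$ and $x_{\text{test}} = (x_1+px_n)/(p+1)$. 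As in Theorem~\ref{thm:influence}, the point of this $x_{\text{test}}$ is that only samples $1$ and $n$ carry nonzero residuals, so $A_{-\{i\}}=0$ for $2\le i\le n-1$ and the whole problem collapses to the interaction between the two clusters.

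Next I would apply Proposition~\ref{prop:amp} to each cluster: removing all $c$ copies of sample $1$ gives group effect $A_{-\{1\}^c} = \frac{c(1-h_{11})}{1-ch_{11}}\,A_{-\{1\}}$, and likewise $A_{-\{n\}^c} = \frac{c(1-h_{nn})}{1-ch_{nn}}\,A_{-\{n\}}$ (note that with $c$ copies present the relevant leverage score becomes $c h_{ii}$, which is why we need $c h_{11}<1$; if this fails the claim is only sharper, or one passes to ridge regression as the setup permits). The behavior of LAGS, by contrast, is driven purely by the \emph{individual} effects $A_{-\{i\}}$: it ranks samples by $A_{-\{i\}}$ and, because all $c$ copies of a point share the same score, it will greedily pick the $c$ copies of whichever of sample $1$ or sample $n$ has the larger individual effect. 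By the residual structure, $A_{-\{1\}} \propto r_1$ and $A_{-\{n\}} \propto r_n$ with $r_1,r_n$ controlled by the single free parameter $p$; I would choose $p$ so that $|A_{-\{n\}}| > |A_{-\{1\}}|$ (the pink cluster wins the individual-effect contest), hence LAGS outputs the $c$ copies of sample $n$. It then remains to show that the true maximizer is instead the $c$ copies of sample $1$, i.e.
\[
\frac{c(1-h_{11})}{1-ch_{11}}\,A_{-\{1\}} \;>\; \frac{c(1-h_{nn})}{1-ch_{nn}}\,A_{-\{n\}},
\]
and also that this green-cluster value beats every other size-$c$ subset (in particular any mix of green, pink, and zero-effect points — handled by monotonicity of the amplification factor in the leverage score together with $A_{-\{i\}}=0$ for the middle samples and a short argument that splitting the budget across both clusters is suboptimal here).

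The main obstacle is exactly this final inequality: I need a single $p$ that simultaneously makes the pink cluster win on individual effects yet lose on group effects. This is where the divergence of the amplification factor as $h_{11}\to 1/c$ does the work — since $h_{11}>h_{nn}$, the factor $\frac{c(1-h_{11})}{1-ch_{11}}$ is strictly larger than $\frac{c(1-h_{nn})}{1-ch_{nn}}$ (monotonicity in the leverage score, which I would verify by a quick derivative computation), so there is slack: as long as $A_{-\{n\}}/A_{-\{1\}}$ is kept below the ratio of amplification factors, the green cluster dominates. Because both $A_{-\{1\}}$ and $A_{-\{n\}}$ scale linearly in $\varepsilon$ and depend on $p$ in an explicit rational way, I expect a whole interval of admissible $p$; exhibiting one such $p$ and checking the two strict inequalities (plus the routine ``no mixed set does better'' step) completes the proof. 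The bookkeeping with $c$ copies — ensuring $ch_{11}<1$, tracking how leverage and cross-leverage scale, and confirming uniqueness of the OLS solution on the reduced problem — is the part most prone to error, so I would isolate it in the lemmas of \Cref{adxsubsec:thm:amp} and keep the main argument at the level of the two displayed inequalities above.
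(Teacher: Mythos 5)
Your proposal takes essentially the same route as the paper: force LAGS to select the \(c\) copies of \((x_n,y_n)\) by arranging \(0 < A_{-\{1\}} < A_{-\{n\}}\), and use \Cref{prop:amp} to reduce the cluster comparison to the double inequality \(1 < A_{-\{n\}}/A_{-\{1\}} < \frac{(1-ch_{nn})(1-h_{11})}{(1-ch_{11})(1-h_{nn})}\), whose interval is nonempty precisely because the amplification factor is increasing in the leverage score (\(c\ge 2\), \(h_{11}>h_{nn}\)). The paper then rewrites this as \(\frac{1-h_{nn}}{1-h_{11}} < \frac{v_n}{v_1} < \frac{1-ch_{nn}}{1-ch_{11}}\) and reuses the three-case continuity-in-\(p\) analysis (by the sign of \(h_{1n}\)) from \Cref{thm:influence}; this is the step you leave at the level of ``I expect a whole interval of admissible \(p\),'' but your plan for it is the same and sound.

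Two corrections. First, the middle samples do \emph{not} have zero residuals: with \(c\) copies, \(r_i = -(ch_{1i}+pch_{in})\varepsilon\) for \(2\le i\le n-1\), which is generically nonzero; what the construction actually delivers (and all you need) is \(A_{-\{i\}}\le 0\) for these samples in the relevant range of \(p\), so LAGS never selects them. Second, you do not need to show the green cluster beats every other size-\(c\) subset, mixed sets included: failure of LAGS only requires that its output is not a maximizer, which already follows from \(A_{-\{1\}^c} > A_{-\{n\}^c}\). Dropping that step eliminates your ``no mixed set does better'' argument --- which would not in fact be routine --- and is exactly how the paper keeps the proof to the two displayed inequalities.
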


\paragraph{Cancellation.}
Cancellation happens when the group effect of a set \(S\) is less than one of its subsets \(S'\), indicating that removing \(S\setminus S'\) induces a negative effect.

\begin{wrapfigure}[17]{r}{0.55\textwidth}
    \centering
    \vspace{-1\intextsep}
    \includegraphics[width=\linewidth]{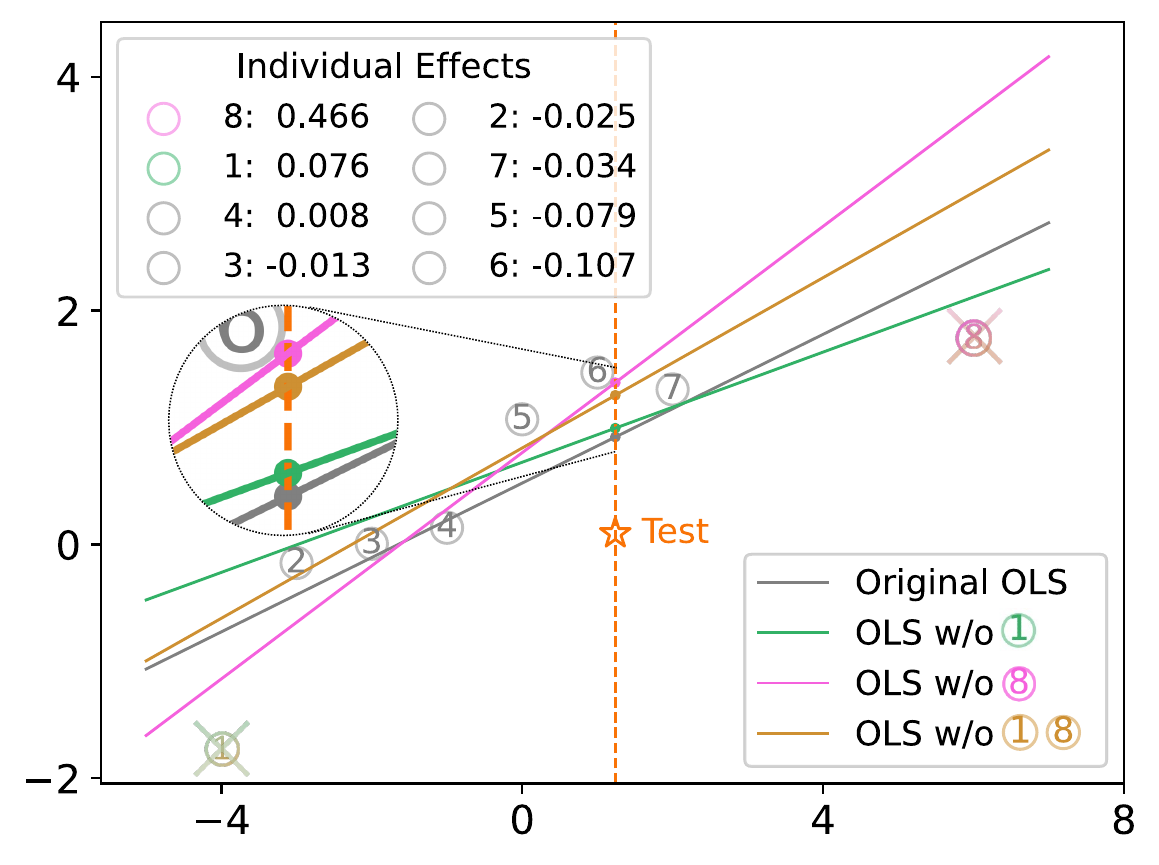}
    \caption{LAGS fails in \(2\)-MISS due to cancellation}
    \label{fig:CL}
\end{wrapfigure}

In this case, cancellation is equivalent to \(A_{-\{1, n\}} < A_{-\{n\}}\) (we assume w.l.o.g. that \(A_{-\{n\}} > A_{-\{1\}}\)). From \Cref{eq:two}, this inequality is likely to hold when \(A_{-\{1\}}\) has a small magnitude compared to \(A_{-\{n\}}\), and the sign of \(h_{1n}\) differs from that of \(\frac{r_n}{r_1}\). If we further have that \(A_{-\{1\}}\) and \(A_{-\{n\}}\) are the top-\(2\) positive individual effects (which guarantees that they will be selected by the greedy algorithm), then LAGS will fail in this context.

We illustrate this in \Cref{fig:CL}: although points {\textcolor{mypink}{\small\numcircledmod{8}}} and {\textcolor{mygreen}{\small\numcircledmod{1}}} have the top-\(2\) individual effects and are positive, their group effect as a size-\(2\) subset is less than the individual effect of point {\textcolor{mypink}{\small\numcircledmod{8}}}.

We present a more general result in the following theorem and defer its proof to \Cref{adxsubsec:thm:cancel}.

\begin{theorem}\label{thm:cancel}
    Assume \(h_{1n} \neq 0\). Under the label generation process described in \Cref{eq:label_gen}, there exists some \(p\), such that LAGS fails in \(2\)-MISS.
\end{theorem}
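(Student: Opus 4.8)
The plan is to reduce $2$-MISS under \Cref{eq:label_gen} to a two-sample problem and then exhibit a value of $p$ at which LAGS is fooled, using a closed form for the marginal group effect. First I would record the residuals: from $y = X\theta^\ast - e$ one gets $\hat\theta = \theta^\ast - N^{-1}X^\top e$ and hence $r = (I-H)e$; since only the first and last entries of $e$ are nonzero (equal to $\varepsilon$ and $p\varepsilon$), this yields $r_1 = \varepsilon\big((1-h_{11}) - p h_{1n}\big)$, $r_n = \varepsilon\big(p(1-h_{nn}) - h_{1n}\big)$, and $r_i = -\varepsilon(h_{1i} + p h_{ni})$ for $2\le i\le n-1$. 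Writing $s_i := x_{\text{test}}^\top N^{-1}x_i = (h_{1i}+p h_{ni})/(p+1)$ and using $A_{-\{i\}} = s_i r_i/(1-h_{ii})$, every middle index satisfies $A_{-\{i\}} = -\varepsilon(h_{1i}+p h_{ni})^2/\big((p+1)(1-h_{ii})\big) \le 0$ whenever $p>-1$. Thus, for $p>-1$ with $A_{-\{1\}},A_{-\{n\}}>0$, LAGS selects exactly $\{1,n\}$, so it suffices to make $A_{-\{1,n\}} < \max\{A_{-\{1\}},A_{-\{n\}}\}$.

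Next I would specialize \Cref{eq:two} to $S=\{1,n\}$ and subtract $A_{-\{n\}}$; the individual-effect terms recombine into a single product, and after substituting the residuals the $p$-dependent factor collapses via the identity $(1-h_{nn})r_1 + h_{1n}r_n = \varepsilon D$, where $D = (1-h_{11})(1-h_{nn}) - h_{1n}^2>0$. One obtains
\begin{align}
    A_{-\{1,n\}} - A_{-\{n\}} = \frac{\varepsilon\big((1-h_{nn})h_{11} + h_{1n}^2 + p\,h_{1n}\big)}{(p+1)(1-h_{nn})},
\end{align}
and, by the same computation with the roles of $1$ and $n$ swapped, $A_{-\{1,n\}} - A_{-\{1\}} = \varepsilon p\big(h_{1n} + p((1-h_{11})h_{nn}+h_{1n}^2)\big)/\big((p+1)(1-h_{11})\big)$. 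Each is a ratio of two affine functions of $p$, so for $p>-1$ its sign is that of its numerator; moreover $A_{-\{1\}}$ and $A_{-\{n\}}$ are themselves explicit rational functions of $p$ with simple zeros where $s_1=0$ (at $p=-h_{11}/h_{1n}$), where $s_n=0$ (at $p=-h_{nn}/h_{1n}$), or where $r_1=0$ or $r_n=0$.

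Finally I would choose $p>-1$ at which one of the two marginal differences is negative while both singleton effects stay positive, by sliding $p$ just past the zero $p_0 := -\big((1-h_{nn})h_{11}+h_{1n}^2\big)/h_{1n}$ of the first numerator (or past the analogous zero of the second), and then verifying $A_{-\{1\}},A_{-\{n\}}>0$ there by continuity from a neighbouring value at which one of them vanishes. The hard part is that the admissible window for $p$ is nonempty only after a short case split on the sign of $h_{1n}$ and on the ordering of $h_{1n}$ relative to $h_{11},h_{nn},1-h_{11},1-h_{nn}$ (equivalently, on how $p_0$ and the four zeros of $A_{-\{1\}},A_{-\{n\}}$ interlace with $-1$). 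The tight case is $0<h_{nn}\le h_{1n}\le h_{11}$: there $A_{-\{1\}}$ and $A_{-\{n\}}$ are simultaneously positive only on an interval on which both marginal differences are positive, so cancellation proper does not occur, and LAGS must instead be defeated by the super-additivity of $\{1,n\}$ at a $p$ for which only one of the two singleton effects is positive --- LAGS keeps that single sample, while additionally removing the other strictly increases $\phi$ --- possibly after perturbing $x_{\text{test}}$ slightly, which is permitted by the Lebesgue-measure remark in the setup of \Cref{sec:pitfalls}. Carrying out this bookkeeping and pinning down a concrete $p$ in each case is the bulk of the work.
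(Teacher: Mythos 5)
Your reduction and algebra are sound and, up to the midpoint, mirror the paper's argument in \Cref{adxsubsec:thm:cancel}: the residuals, the nonpositivity of the middle effects for \(p>-1\), and your closed form \(A_{-\{1,n\}}-A_{-\{n\}} = \varepsilon\bigl(h_{11}(1-h_{nn})+h_{1n}^2+p\,h_{1n}\bigr)/\bigl((p+1)(1-h_{nn})\bigr)\) (a slick collapse via \((1-h_{nn})r_1+h_{1n}r_n=\varepsilon D\)) are exactly equivalent to the paper's condition \Cref{eq:canc_cond} derived from \Cref{eq:two}. The genuine gap is that you stop precisely where the theorem's content lies: you never exhibit a \(p\) at which both \(A_{-\{1\}}\) and \(A_{-\{n\}}\) are positive while a marginal difference is negative, and you say yourself that "pinning down a concrete \(p\) in each case is the bulk of the work." The paper closes this in a few lines: for \(h_{1n}<0\) it takes \(p\) in the positive interval \((-h_{1n}/h_{nn},\,-h_{11}/h_{1n})\), on which both singleton effects are positive (and \(p>-1\) is automatic), and notes that as \(p\to-h_{11}/h_{1n}\) the cancellation expression tends to \(h_{1n}^2-h_{11}h_{nn}<0\) by the Cauchy--Schwarz bound of \Cref{lemma:cl}; that inequality, which also makes the interval nonempty, never appears in your argument, and without it your "slide \(p\) just past \(p_0\)" step has no guarantee of landing inside the positivity window (indeed \(p_0\) can lie below \(-1\), outside the regime where your sign analysis is valid). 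Likewise, "verifying \(A_{-\{1\}},A_{-\{n\}}>0\) by continuity from a neighbouring value at which one of them vanishes" is not an inference: at a zero the sign can go either way, and determining the correct side is exactly the bookkeeping you are deferring.

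The fallback you propose for the hard regime is also not a proof. Defeating LAGS "by super-additivity \ldots possibly after perturbing \(x_{\text{test}}\)" exits the theorem's construction --- the statement is proved for the specific \(x_{\text{test}}=(x_1+px_n)/(p+1)\) of \Cref{sec:pitfalls}, and the Lebesgue-measure remark extends a conclusion already established to nearby test points; it does not license changing the test point mid-proof --- and you supply neither a value of \(p\) nor a verification that the modified configuration actually fools LAGS (note also that once only one singleton effect is positive, LAGS's output changes and the failure criterion must be re-derived). To be fair, your instinct that the \(h_{1n}>0\) regime is delicate has substance: there the cancellation inequality forces \(p<p_0<0\), the factor \(\varepsilon/(p+1)\) controls all sign equivalences, and the paper's own Case 2 (choosing \(p\) near \(-h_{11}/h_{1n}\), possibly below \(-1\)) is terser than one would like. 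But flagging the difficulty is not resolving it; as written, your proposal establishes the theorem in neither the easy case \(h_{1n}<0\), which the paper handles completely, nor the hard one.
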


\begin{takeaway}
    LAGS provably works for MISS when all cross-leverage scores are zero, but can fail with even a single non-zero cross-leverage score. This highlights the algorithm's fragility.
\end{takeaway}
\section{Promises of the adaptive greedy algorithm}\label{sec:promises}
Given the limitations of LAGS, a pertinent question arises: is it possible to capture the non-additive structure of the joint effect without enumerating subsets? In this section, we examine a refined heuristic proposed by \citet{kuschnig2021hidden}, and provide a theoretical analysis following our framework in \Cref{sec:pitfalls}. \citet{kuschnig2021hidden} originally introduced this refined algorithm in the context of linear regression, which applies to general influence-based greedy heuristics. The idea is to \emph{adaptively} build the influential subset. Specifically, the algorithm works by
\begin{enumerate*}[label=\arabic*)]
    \item refitting the model on the current dataset and recalculating the individual effect or influence estimate for each sample;
    \item excluding the most influential sample from the current dataset;
    \item adding it to the influential subset.
\end{enumerate*}
This iterative process is repeated until the subset reaches the desired size. We refer to this as the \emph{adaptive greedy algorithm}.

It is empirically observed that the adaptive greedy algorithm outperforms LAGS in linear regression~\citep{kuschnig2021hidden}. In this section, we further aim to provide theoretical support for the benefits of \emph{adaptivity}.  Specifically, we will show that in scenarios where LAGS fail due to cancellation, the adaptive greedy algorithm can effectively address this problem by leveraging a scoring function that captures the marginal contributions relative to the removal of the most influential sample.

Following the cancellation setup, \((x_n, y_n)\) is the most influential sample w.r.t.\ the full dataset.
We denote \(A'_{-\{i\}}\) as the actual effect of removing \((x_i, y_i)\) for \(1\leq i \leq n-1\) \emph{after} the removal of \((x_n, y_n)\). Essentially, \(A'\) is the scoring function employed in the second step of the adaptive greedy algorithm. We start by proving two useful properties of \(A'\) (the proof is deferred to \Cref{adxsubsec:prop_properties}).

\begin{proposition}\label{prop:properties}
    The scoring function \(A'\) satisfies the following properties:
    \begin{enumerate}[noitemsep,nolistsep,topsep=0pt]
        \item \textbf{Sign consistency}: \(A'_{-\{i\}}\) and \((A_{-\{i,n\}}-A_{-\{i\}})\) have the same sign for \(1\leq i\leq n-1\);
        \item \textbf{Order preservation}: \(\{A'_{-\{i\}}\}_{i=2}^{n-1}\) and \(\{A_{-\{i, n\}}\}_{i=2}^{n-1}\) are order-isomorphic.
    \end{enumerate}
\end{proposition}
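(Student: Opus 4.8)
The whole argument rests on one bookkeeping identity. By construction of the adaptive procedure, once $(x_n,y_n)$ has been deleted the recomputed individual effect of $(x_i,y_i)$ is measured against the refitted baseline $\hat\theta_{-\{n\}}$, so
\[
  A'_{-\{i\}} \;=\; \phi(\hat\theta_{-\{i,n\}}) - \phi(\hat\theta_{-\{n\}}) \;=\; A_{-\{i,n\}} - A_{-\{n\}} .
\]
The order-preservation claim is then immediate: $\{A'_{-\{i\}}\}_{i=2}^{n-1}$ is obtained from $\{A_{-\{i,n\}}\}_{i=2}^{n-1}$ by subtracting the single constant $A_{-\{n\}}$, and a translation is an order-isomorphism of $\mathbb{R}$.

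For sign consistency I would derive closed forms for both $A'_{-\{i\}}$ and $A_{-\{i,n\}}-A_{-\{i\}}$. Writing $u_j := x_{\text{test}}^{\top}N^{-1}x_j$, $p_i := 1-h_{ii}$, $p_n := 1-h_{nn}$, $D := p_i p_n - h_{in}^2 > 0$, and expanding \Cref{eq:two} (using $u_j r_j = p_j A_{-\{j\}}$), a couple of lines of algebra give
\[
  A'_{-\{i\}} = \frac{(p_n r_i + h_{in} r_n)(p_n u_i + h_{in} u_n)}{p_n D},
  \qquad
  A_{-\{i,n\}}-A_{-\{i\}} = \frac{(p_i r_n + h_{in} r_i)(p_i u_n + h_{in} u_i)}{p_i D};
\]
these are just the single-deletion formula $A = ur/(1-h)$ applied on the dataset with $n$ (resp.\ $i$) deleted, the two numerator factors being $p_n$ (resp.\ $p_i$) times the Sherman--Morrison-corrected analogues of $u$ and $r$ after that deletion. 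Since $p_i,p_n,D>0$, each sign is the sign of the corresponding numerator product, so it remains to show the two products always agree in sign.

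Here I would use \Cref{eq:label_gen} together with the choice $x_{\text{test}} = \tfrac{x_1+p x_n}{1+p}$: for every index $j$ one has $r_j = e_j - \varepsilon(1+p)u_j$, so for $2 \le i \le n-1$ (where $e_i = 0$) this gives $r_i = -\varepsilon(1+p)u_i$, whence $A_{-\{i\}} = -\varepsilon(1+p)u_i^2/p_i \le 0$. Substituting $r_i = -\varepsilon(1+p)u_i$ into the numerators above and simplifying (the $h_{nn}$-terms telescope) yields factorizations of the form $A'_{-\{i\}} \propto -\alpha(\alpha + p h_{in})$ and $A_{-\{i,n\}}-A_{-\{i\}} \propto (pD-\gamma)(\gamma + p\lambda)$ with $\alpha,\gamma,\lambda$ explicit in the hat-matrix entries; combined with $A'_{-\{i\}} = (A_{-\{i,n\}}-A_{-\{i\}}) + (A_{-\{i\}}-A_{-\{n\}})$ and $A_{-\{i\}}-A_{-\{n\}} < 0$ (recall $A_{-\{n\}} > 0$ in the cancellation regime), sign consistency for $2 \le i \le n-1$ reduces to the single inequality $A_{-\{i,n\}}-A_{-\{i\}} \le 0$, which one verifies for the range of $p$ fixed by the cancellation construction. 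The index $i = 1$ I would handle separately from the same closed forms, using $\hat\theta_{-\{1,n\}} = \theta^*$ (since $e$ vanishes on $\{2,\dots,n-1\}$) to obtain especially transparent expressions for $A'_{-\{1\}}$ and $A_{-\{1,n\}}-A_{-\{1\}}$.

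The main obstacle is precisely this last sign comparison. A priori $A'_{-\{i\}}$ and $A_{-\{i,n\}}-A_{-\{i\}}$ are different quantities --- the first deletes $i$ after $n$, the second deletes $n$ after $i$ --- and that their signs always coincide is not a formal consequence of anything: it hinges on the sign pattern forced by $e$ being supported on $\{1,n\}$, on the particular $x_{\text{test}}$, and on the constraints on $p$ inherited from the cancellation setup. Equivalently one must rule out the ``sandwiched'' configuration in which $A_{-\{i,n\}}$ lies strictly between $A_{-\{i\}}$ and $A_{-\{n\}}$; carrying out this sign bookkeeping carefully --- not the algebraic identities, which are routine --- is where the real effort goes.
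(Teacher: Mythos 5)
Your opening identity \(A'_{-\{i\}}=\phi(\hat\theta_{-\{i,n\}})-\phi(\hat\theta_{-\{n\}})=A_{-\{i,n\}}-A_{-\{n\}}\) is exactly right (it is the exact single-deletion formula applied to the dataset with \(n\) removed, and your factored closed form for \(A'_{-\{i\}}\) agrees with what \Cref{eq:two} gives for \(A_{-\{i,n\}}-A_{-\{n\}}\)), and it is really the entire content of the proposition. Order preservation follows from it as you say, and the sign-consistency property that the paper actually proves and uses compares \(A'_{-\{i\}}\) with \(A_{-\{i,n\}}-A_{-\{n\}}\), \emph{not} with \(A_{-\{i,n\}}-A_{-\{i\}}\): the appendix proof in \Cref{adxsubsec:prop_properties} establishes precisely \(A'_{-\{i\}}<0\iff A_{-\{i,n\}}<A_{-\{n\}}\) by reducing both sides to \Cref{eq:canc_cond} (for \(i=1\)) and \Cref{eq:canc_cond_i} (for \(2\le i\le n-1\)), and \Cref{thm:recover} invokes the property in exactly that form; the ``\(A_{-\{i\}}\)'' in the printed statement is evidently a typo for ``\(A_{-\{n\}}\)''. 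Measured against that intended statement, your identity-based argument is complete and is cleaner than the paper's route, which expands \(A_{-\{i,n\}}\) via \Cref{eq:two} and \(A'_{-\{i\}}\) via Sherman--Morrison into explicit polynomials in \(p\) and the hat-matrix entries, and then matches the two by exhibiting a common quantity \(B_i\) on which both depend monotonically.

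The gap is in your second and third paragraphs, where you take the typo at face value and try to prove \(\sgn A'_{-\{i\}}=\sgn\bigl(A_{-\{i,n\}}-A_{-\{i\}}\bigr)\). That is a genuinely different claim---as you note, it amounts to ruling out the sandwiching \(A_{-\{i\}}<A_{-\{i,n\}}<A_{-\{n\}}\)---and it is false in general, so the ``sign bookkeeping'' you defer cannot be completed. Concretely, take \(2\le i\le n-1\) with \(h_{in}=0\) and \(h_{1i}\neq 0\) small; writing \(u_i=x_{\text{test}}^{\top}N^{-1}x_i=h_{1i}/(p+1)\) and \(s_i=(1-h_{ii})(1-h_{nn})-h_{in}^2>0\), your own factored formula gives \(A'_{-\{i\}}=-\varepsilon(p+1)(1-h_{nn})u_i^2/s_i<0\) whenever \(p>-1\) (e.g.\ in the \(h_{1n}<0\) branch of the cancellation construction, where \(p>0\)), while \(A_{-\{i,n\}}-A_{-\{i\}}=A_{-\{n\}}+A'_{-\{i\}}-A_{-\{i\}}\to A_{-\{n\}}>0\) as \(h_{1i}\to 0\), since \(A'_{-\{i\}}\) and \(A_{-\{i\}}\) are \(O(h_{1i}^2)\). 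For such an \(i\) the two signs disagree, so no case analysis over the admissible range of \(p\) can rescue the literal version. The correct move is the one your first paragraph already makes: prove the \(A_{-\{n\}}\) version, which is immediate from the identity.
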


These properties have significant implications. The first property indicates that \(A'\) is a more reliable scoring function as it captures the marginal contribution of each sample \emph{relative to the removal of} \((x_n, y_n)\). Hence, in the cancellation setup, \(A'\) will not choose \((x_1, y_1)\), even though \(A_{-\{1\}}\) represents the second-largest individual effect and is positive. In contrast, the actual effect \(A\), which reflects the marginal contribution of each sample relative to the full dataset, does not account for how a newly selected sample interacts with those already selected. The second property further guarantees the success of MISS based on \(A'\). Formally, we prove the following for the adaptive greedy algorithm.
\begin{theorem}\label{thm:recover}
    Under the label generation process described in \Cref{eq:label_gen}, suppose \(A_{-\{1\}}, A_{-\{n\}} >0\), \(A_{-\{1, n\}} < A_{-\{n\}}\) (indicating cancellation), and that \(n\in S_{\emph{opt},2}\) (i.e., \((x_n, y_n)\) is contained in the most influential subset), then the adaptive greedy algorithm solves \(2\)-MISS.
\end{theorem}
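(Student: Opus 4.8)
The plan is to trace the two rounds of the adaptive greedy algorithm and show that each round selects the ``correct'' index. First I would pin down the first selection: by the cancellation setup, $(x_n,y_n)$ is the most influential single sample, i.e.\ $A_{-\{n\}}=\max_{i\in[n]}A_{-\{i\}}>0$, and since in linear regression each individual effect is computed exactly by the leverage-adjusted formula, round one scores $n$ highest and — its score being positive — appends it and continues. One technical item to dispatch here is that OLS stays well-defined after deleting $(x_n,y_n)$: this follows from the standing assumption that $\sum_{i=2}^{n-1}x_ix_i^\top$ is invertible, so $X_{-\{n\}}$ has full column rank and \Cref{prop:exact} together with the leverage-adjusted individual-effect formula transfer verbatim to the refitted model.

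The crux is round two, in which the algorithm ranks the surviving indices by $A'_{-\{i\}}$, the actual effect of deleting $(x_i,y_i)$ from the dataset that already excludes $(x_n,y_n)$. Unwinding the definitions (the target function $\phi = x_{\text{test}}^\top\theta$ is unchanged throughout) gives the exact identity
\begin{align}
    A'_{-\{i\}} \;=\; \phi(\hat\theta_{-\{i,n\}}) - \phi(\hat\theta_{-\{n\}}) \;=\; A_{-\{i,n\}} - A_{-\{n\}}, \qquad 1\le i\le n-1,
\end{align}
so that $A'$ and $\{A_{-\{i,n\}}\}$ differ by the constant $-A_{-\{n\}}$ and hence induce the same ranking — which is precisely the order-preservation clause of \Cref{prop:properties}. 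Two consequences drive the argument. First, for $i=1$ the cancellation hypothesis $A_{-\{1,n\}}<A_{-\{n\}}$ forces $A'_{-\{1\}}<0$: the sample carrying the second-largest \emph{individual} effect, which is exactly what trips up LAGS, now receives a \emph{negative} adaptive score and is discarded; this is the mechanism by which adaptivity ``sees'' the interaction between $(x_1,y_1)$ and $(x_n,y_n)$. Second, the index appended in round two is $j^\star := \argmax_{i\in[n-1]}A_{-\{i,n\}}$.

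It remains to identify $\{n,j^\star\}$ with $S_{\mathrm{opt},2}$. Since $n\in S_{\mathrm{opt},2}$ and $|S_{\mathrm{opt},2}|\le 2$, the optimal set is $\{n\}$ or $\{n,j\}$ for some $j\le n-1$. In the second case optimality forces $j=j^\star$ and $A_{-\{n,j^\star\}}>A_{-\{n\}}$, so $A'_{-\{j^\star\}}>0$, the procedure does not halt early, and it outputs exactly $\{n,j^\star\}=S_{\mathrm{opt},2}$. In the first case $A_{-\{n,j\}}\le A_{-\{n\}}$ for every $j$, hence $A'_{-\{i\}}\le 0$ for all $i$, and under the usual convention (inherited from ZAMinfluence) that the greedy procedure appends only samples with positive marginal score, it halts after round one and returns $\{n\}=S_{\mathrm{opt},2}$.

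The main obstacle I anticipate is bookkeeping rather than a hard estimate: carefully stating what ``solves $2$-MISS'' means when $S_{\mathrm{opt},2}$ could be a singleton, fixing the exact stopping rule of the adaptive procedure, and checking that the closed forms survive refitting — the last handled by the invertibility assumption on $\sum_{i=2}^{n-1}x_ix_i^\top$. Beyond that, the theorem is a direct consequence of the displayed identity together with \Cref{prop:exact,prop:properties}.
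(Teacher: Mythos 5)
Your overall architecture matches the paper's proof---select \((x_n,y_n)\) in round one, then split on whether \(S_{\text{opt},2}\) is the singleton \(\{n\}\) or a pair \(\{n,i^*\}\), and use the sign/order behavior of the second-round scores---and your telescoping identity \(A'_{-\{i\}}=\phi(\hat\theta_{-\{i,n\}})-\phi(\hat\theta_{-\{n\}})=A_{-\{i,n\}}-A_{-\{n\}}\) is in fact a cleaner route to both clauses of \Cref{prop:properties} than the closed-form computations in \Cref{adxsubsec:prop_properties}: since the second-round score differs from \(A_{-\{i,n\}}\) by the constant \(A_{-\{n\}}\), sign consistency and order preservation are immediate.

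The genuine gap is in round one. You assert that ``by the cancellation setup, \((x_n,y_n)\) is the most influential single sample,'' but \(A_{-\{n\}}>A_{-\{1\}}\) is not among the hypotheses of \Cref{thm:recover}; the theorem only assumes \(A_{-\{1\}},A_{-\{n\}}>0\), \(A_{-\{1,n\}}<A_{-\{n\}}\), and \(n\in S_{\text{opt},2}\), and membership of \(n\) in the optimal \emph{pair} does not imply \(n\) is the best \emph{singleton}. Under the label generation process in \Cref{eq:label_gen} one has \(A_{-\{i\}}\le 0\) for \(2\le i\le n-1\), so the only competitor is sample \(1\), and ruling it out is precisely the paper's technical lemma (\Cref{adxsubsec:tech_lemma}): the implication that \(A_{-\{1\}},A_{-\{n\}}>0\) together with \(A_{-\{1,n\}}<A_{-\{n\}}\) forces \(A_{-\{1\}}<A_{-\{n\}}\) requires a real argument (a case analysis on the sign of \(h_{1n}\) using \Cref{lemma:cl} and the inequality \(\left(h_{11}(1-h_{nn})+h_{1n}^2\right)\left(h_{nn}(1-h_{11})+h_{1n}^2\right)>h_{1n}^2\)), and it cannot be dismissed as a relabeling convention: the hypotheses are asymmetric in the indices \(1\) and \(n\), so the ``w.l.o.g.'' of \Cref{subsec:additivity} is not available here. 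Without this step, round one could a priori select sample \(1\), and the rest of your argument does not apply. The remaining items you flag (well-posedness after deleting \((x_n,y_n)\), the stopping rule when all second-round scores are nonpositive, ties) are handled correctly and mirror the paper's Cases 1 and 2.
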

\begin{proof}
    We first show that the condition \(A_{-\{1, n\}} < A_{-\{n\}}\) implies that \((x_n, y_n)\) is the most influential sample (the proof is deferred to \Cref{adxsubsec:tech_lemma}). This ensures that the adaptive greedy algorithm will select \((x_n, y_n)\) in the first step. We now discuss two cases separately.

    \textbf{Case 1:} If \(A_{-\{i,n\}}-A_{-\{n\}}<0\) for every \(2 \leq i \leq n-1\), then \(S_{\text{opt},2} = \{n\}\). Furthermore, by the first property of \Cref{prop:properties} we have \(A'_{-\{i\}} < 0\) for \(1 \leq i \leq n-1\). This implies that the adaptive algorithm will return \(\varnothing\) in the second step since no scores are positive, as desired.

    \textbf{Case 2:} If there exists some \(2 \leq i \leq n-1\), such that \(A_{-\{i,n\}}-A_{-\{n\}}>0\). We denote the most influential subset as \(S_{\text{opt},2} = \{i^*, n\}\). Since \(A_{-\{i^*,n\}}-A_{-\{n\}}>0\), the first property of \Cref{prop:properties} implies \(A_{-\{i^*\}} > 0\). Furthermore, by the second property of \Cref{prop:properties}, the adaptive greedy algorithm will return the correct index \(i^*\) in the second step.

    Combining the above two cases finishes the proof of \Cref{thm:recover}.
\end{proof}

\begin{remark}
    In the cancellation setup, our theoretical results are restricted to \(2\)-MISS. We identify two challenges: 1) Conceptually, it is not immediately clear how to define cancellation for more than two samples; 2) Technically, proving the success of MISS is much harder than constructing a counterexample since it requires enumerating all possible subsets, whose number grows exponentially with \(k\). We leave this as future work.
\end{remark}

\begin{takeaway}
    In essence, the critical limitation of LAGS and other influence-based greedy heuristics is their reliance on a \emph{one-pass} procedure that measures the contribution of each sample \emph{solely in relation to the full training set}.  On the other hand, the adaptive greedy algorithm considers more complex interactions between samples, akin to those in data Shapley~\citep{ghorbani2019data}, leading to more effective subset selection.
\end{takeaway}
\section{Experiments}\label{sec:experiment}
In this section, we empirically evaluate the efficacy of the adaptive greedy algorithm on real-world datasets by comparing the performance of the vanilla greedy algorithm \emph{versus} the adaptive greedy algorithm across a range of \(k\)'s.\footnote{Our code is publicly available at \url{https://github.com/sleepymalc/MISS}.} We cover the simple linear regression studied in \Cref{sec:pitfalls,sec:promises} as well as more complicated scenarios (including the classification task and non-linear neural networks) as a complement. Additional experiments on synthetic datasets can be found in \Cref{adsex:exp-synthetic}.

\begin{figure}[htpb]
    \centering
    \begin{subfigure}[b]{0.32\textwidth}
        \centering
        \includegraphics[width=\textwidth]{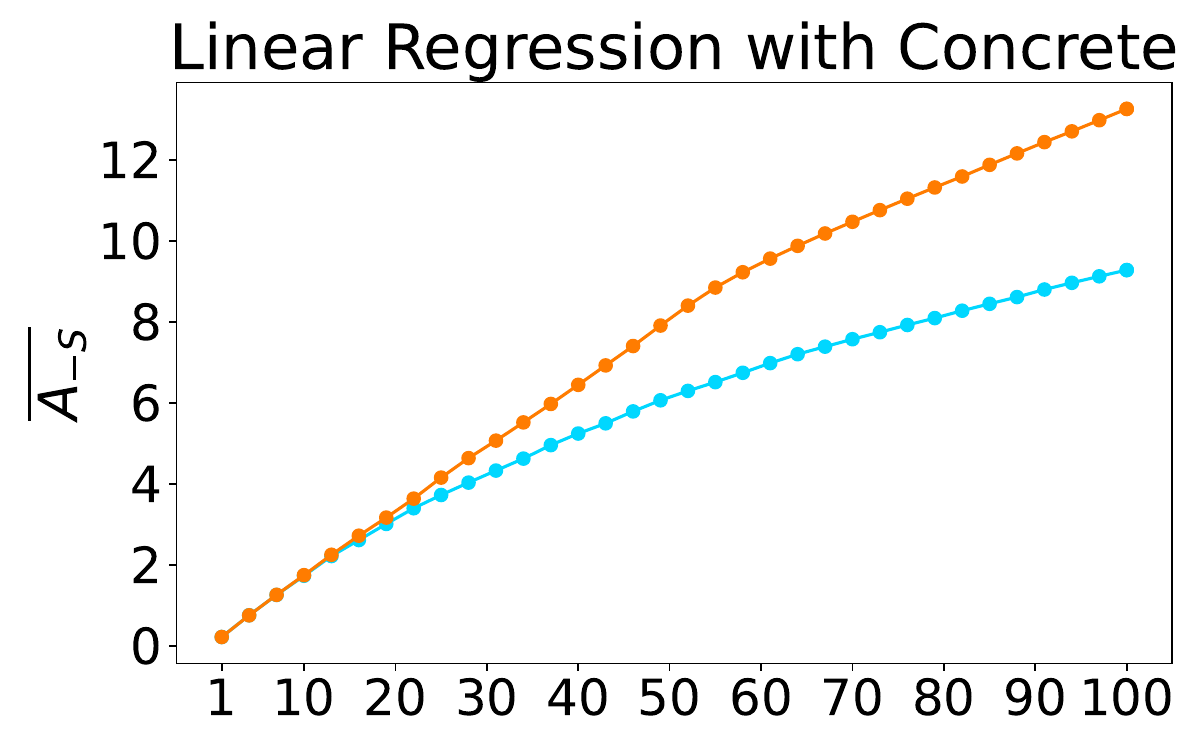}
    \end{subfigure}
    \hfill
    \begin{subfigure}[b]{0.32\textwidth}
        \centering
        \includegraphics[width=\textwidth]{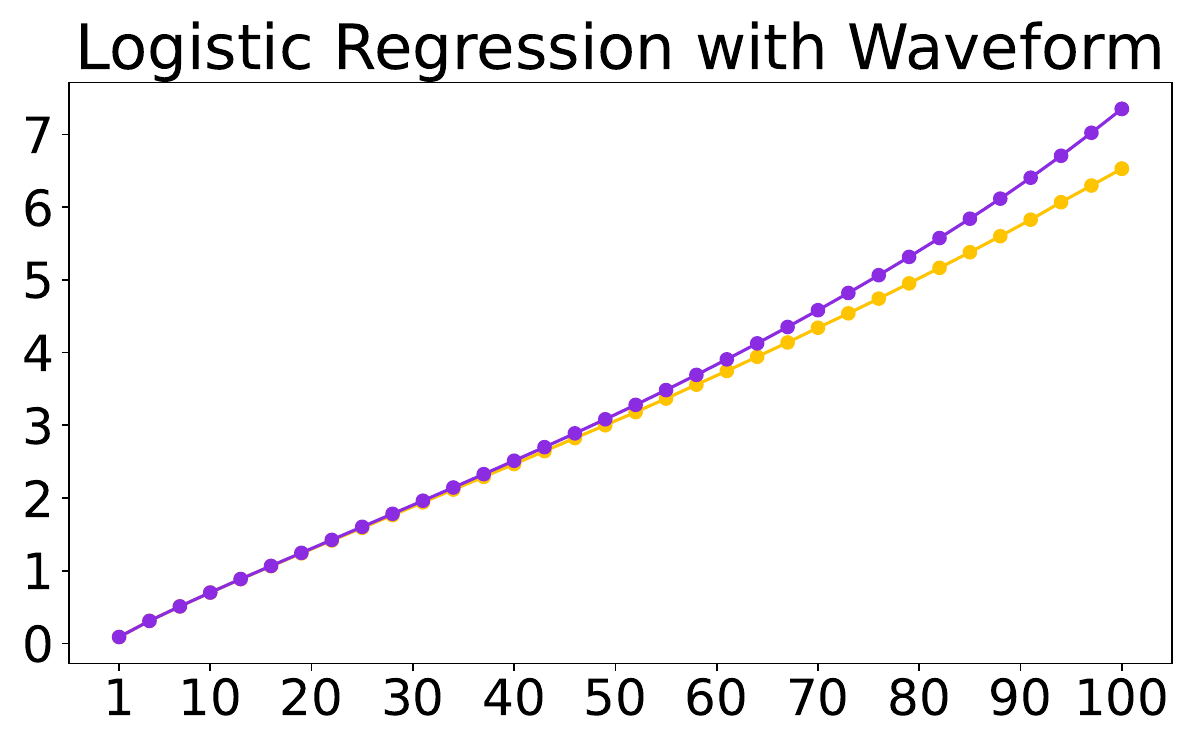}
    \end{subfigure}
    \hfill
    \begin{subfigure}[b]{0.32\textwidth}
        \centering
        \includegraphics[width=\textwidth]{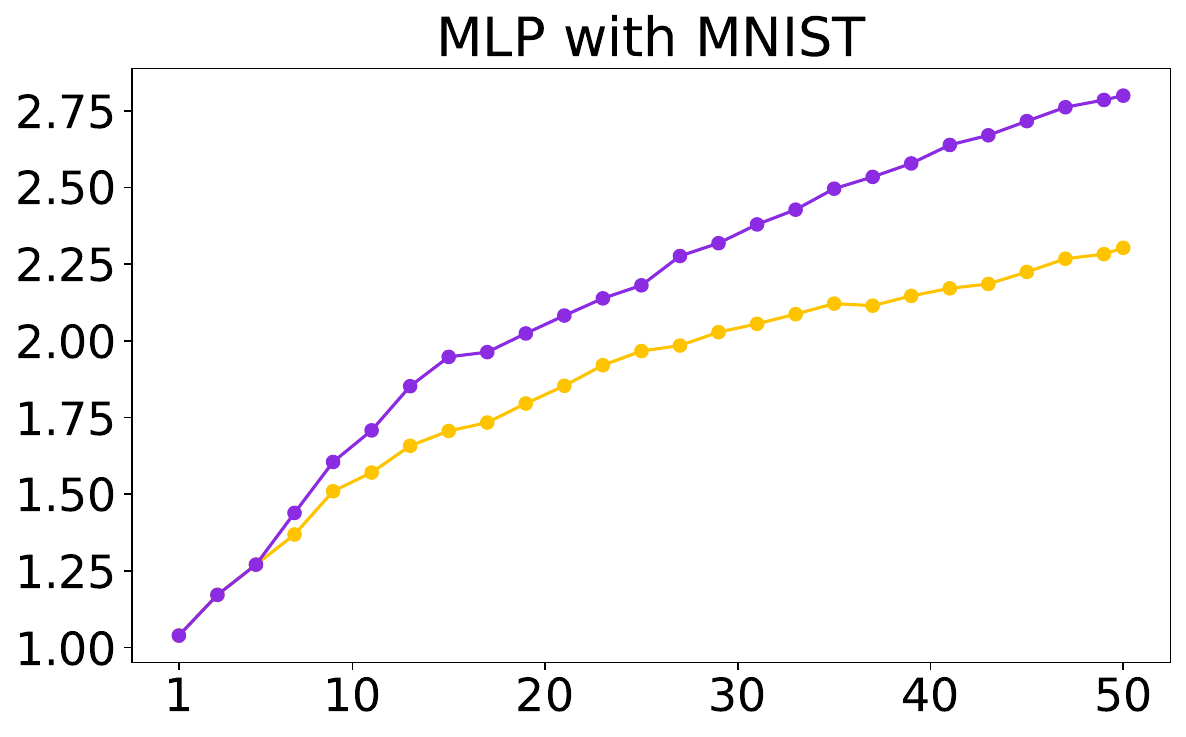}
    \end{subfigure}
    \begin{subfigure}[b]{0.32\textwidth}
        \centering
        \includegraphics[width=\textwidth]{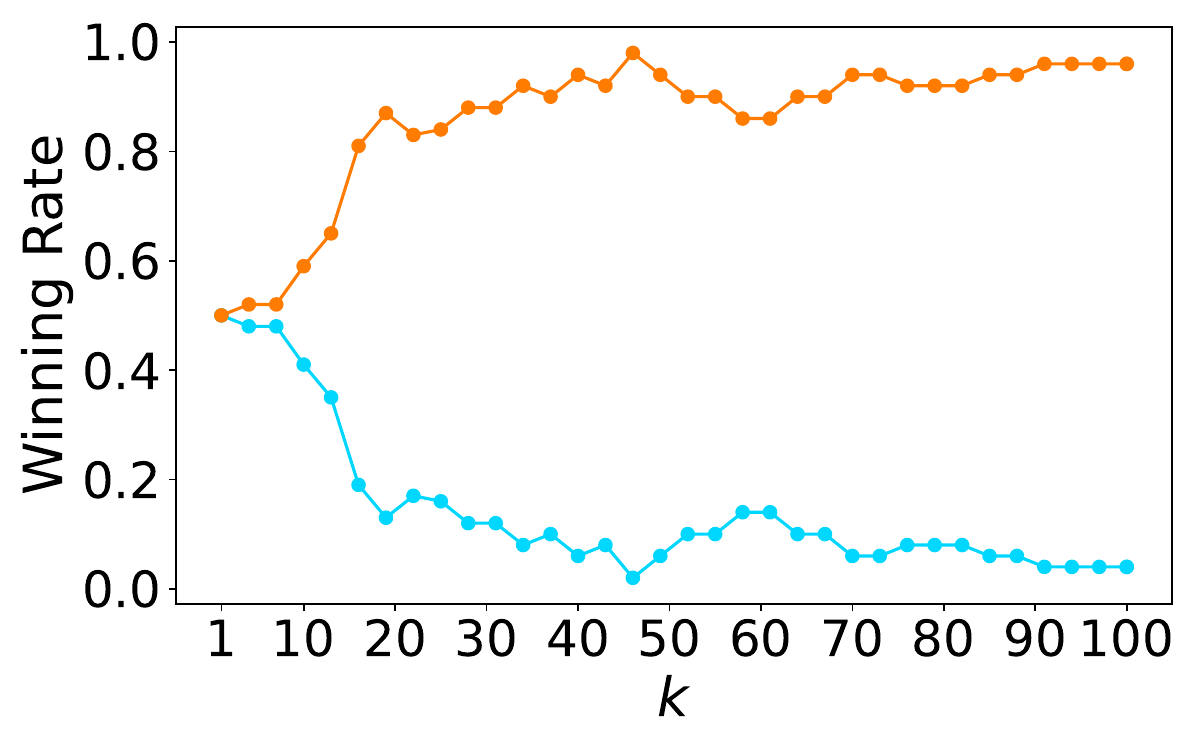}
    \end{subfigure}
    \hfill
    \begin{subfigure}[b]{0.32\textwidth}
        \centering
        \includegraphics[width=\textwidth]{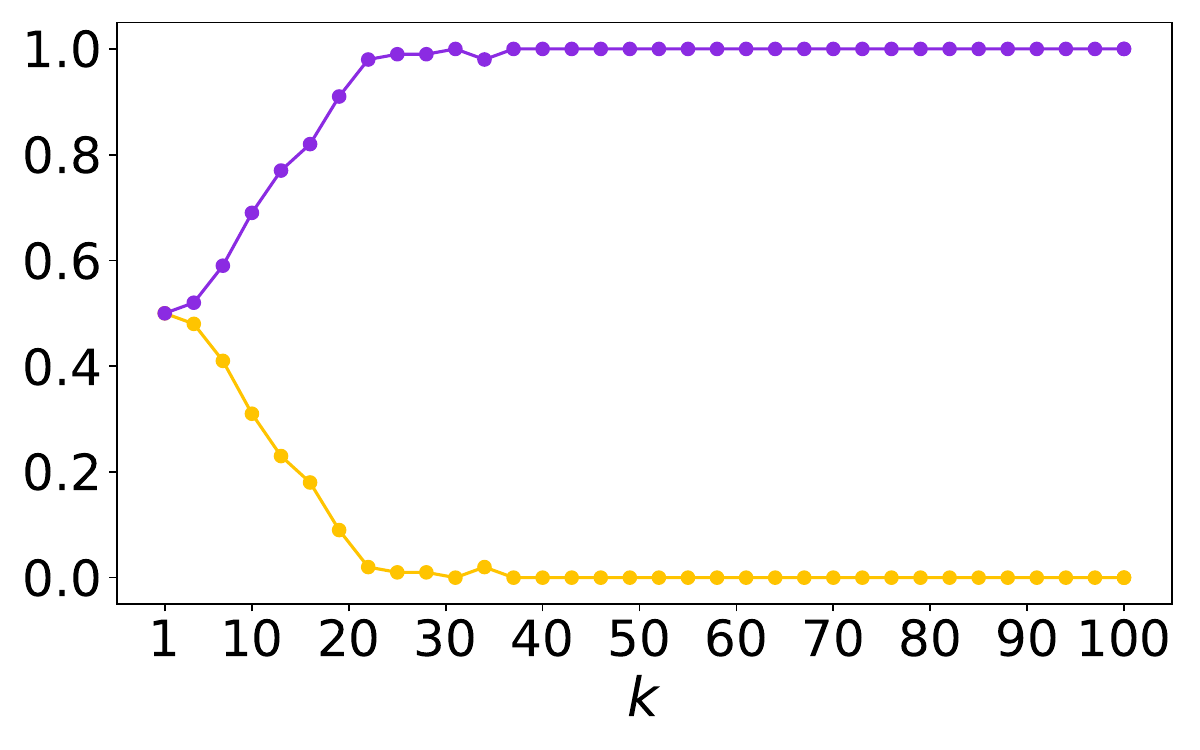}
    \end{subfigure}
    \hfill
    \begin{subfigure}[b]{0.32\textwidth}
        \centering
        \includegraphics[width=\textwidth]{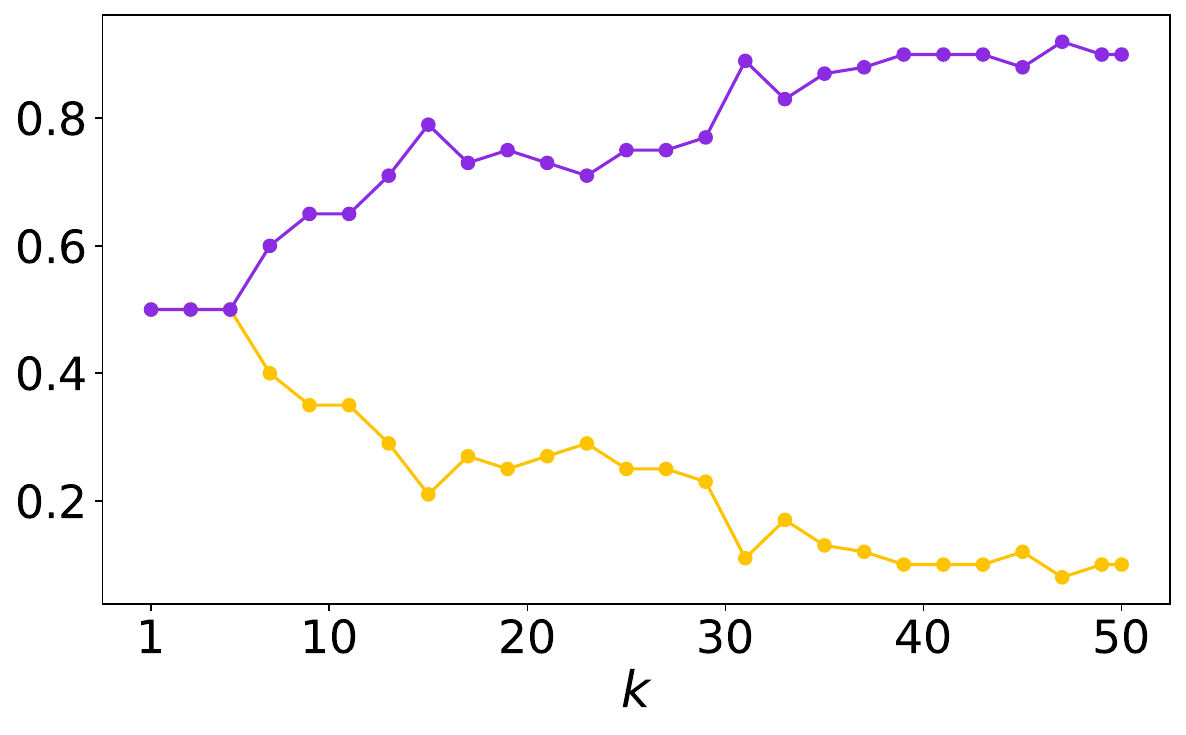}
    \end{subfigure}
    \includegraphics[width=\textwidth]{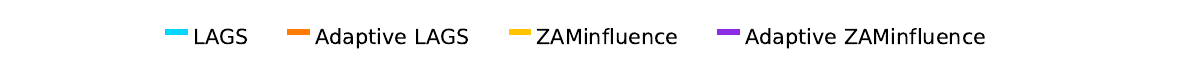}
    \caption{Adaptive Greedy v.s.\ Greedy Algorithm. \textbf{Row 1}: Averaged actual effect \(\overline{A_{-S}}\) measures the averaged actual effect induced by the greedy and adaptive greedy algorithms. \textbf{Row 2}: Winning rate indicates the proportion of instances where one algorithm outperforms the other.}
    \label{fig:experiments}
\end{figure}

\paragraph{Evaluation metrics.}
We evaluate the algorithms using two metrics, the \emph{averaged actual effect} and the \emph{winning rate}. Given a held-out test set, we define the averaged actual effect \(\overline{A_{-S}}\) as the mean of the actual effects w.r.t.\ each test point. A higher score of \(\overline{A_{-S}}\) indicates a more influential subset is selected on average. Additionally, we report the \emph{winning rate} across test data points in a held-out test set, namely the ratio of the algorithm outperforms the other one in terms of the actual effect \(A_{-S}\).

\paragraph{Target functions and greedy algorithms.}
We consider two types of tasks: regression and classification. For the regression task, we adopt the target function \(\phi(\theta) = x_{\text{test}}^\top \theta\) on a given test point \(z \coloneqq (x_{\text{test}}, y_{\text{test}})\). We utilize LAGS as the vanilla greedy algorithm. For the classification task, we consider the target function \(\phi(\theta) = \log (p(z ; \theta) / (1 - p(z; \theta)))\), where \(p(z; \theta)\) represents the softmax probability assigned to the correct class. We opt for the ZAMinfluence as the vanilla greedy algorithm.

\paragraph{Experimental setup.}
For regression, we choose a popular UCI dataset \emph{Concrete Compressive Strength}~\citep{misc_concrete_compressive_strength_165}. For classification, we experiment with a moderate-scale UCI tabular dataset \emph{Waveform Database Generator}~\citep{misc_waveform_database_generator_version_1_107} and an image dataset MNIST~\citep{lecun1998gradient}. We apply logistic regression on the former and a simple 2-layer multi-layer perceptron (MLP) on the latter. We defer details of the datasets, train/test split, and MLP training to \Cref{adxsec:exp}.

\paragraph{Approximated actual effect.}
We address one unique challenge for the MLP: for neural networks, it is impossible to obtain the actual effect since the optimal model is not unique in general. To address this, we adopt an ensemble technique used in recent literature~\citep{park2023trak}: averaging the target function's values from several independently trained models. Specifically, we train \(5\) models with the same initialization but different seeds. This works for both the greedy algorithm and evaluation: for the former, we estimate each model's influence with the ZAMinfluence algorithm and select the most influential subset based on the averaged influence; for the latter, we approximate the actual effect of a subset \(S\) by the averaged difference of the target values of each model, trained with or without \(S\).

While ensemble solves the non-uniqueness problem, it induces a significant computational burden. Noticeably, the adaptive greedy algorithm now requires retraining for (\(k \times \text{number of ensembles} \)) times. To mitigate it, we use an efficient approximate variant of the ZAMifluence estimation algorithm in our implementation and devise two strategies. We defer the concrete descriptions to \Cref{adxsubsec:exp-compute}.

\paragraph{Results.}
We present the main results in \Cref{fig:experiments}.
First, we see that as \(k\) increases, the averaged actual effect \(\overline{A_{-S}}\) given by both the vanilla and the adaptive greedy algorithms increase, which aligns with the intuition that removing a larger set \(S\) induces a greater joint effect \(\overline{A_{-S}}\). Furthermore, the adaptive greedy algorithm surpasses its vanilla counterpart across all scenarios and all \(k\)'s under both metrics. This implies that the benefits of adaptivity extend beyond linear regression and apply to more complicated scenarios like classification tasks and even non-linear neural networks.

Finally, for the experiment on MLP specifically, we report results of multiple random seeds in \Cref{adxsubsec:exp-multiple} to account for the randomness in model training. The consistent results across different seeds demonstrate the robustness of the aforementioned conclusions.
\section{Discussion}\label{sec:discussion}
\paragraph{Failure of the adaptive greedy algorithm.}
While \Cref{thm:recover} demonstrates the advantages of the adaptive greedy algorithm, it is still not perfect. Specifically, the assumption \(n\in S_{\text{opt},2}\) in \Cref{thm:recover} is actually necessary: if the most influential sample is not part of the most influential subset, the algorithm will make an error in the first step and cannot correct this mistake in subsequent procedures. For instance, under the amplification setup as in \Cref{thm:amp}, it is straightforward to see that the adaptive greedy algorithms provably fail in \(c\)-MISS since it selects \((x_n, y_n)\) in the first place.

\paragraph{Second-order approximation.}
To more effectively capture the amplification effect caused by clusters of similar samples, it is essential to utilize algorithms that can detect higher-order interactions. In this context, the second-order group influence introduced by \citet{basu2020second} is a more powerful alternative. It is calculated based on the second-order approximation as described in \Cref{remark:series}:
\begin{equation}\label{eq:qs}
    Q_{-S}  = x_{\text{test}}^{\top}N^{-1}X_S^{\top}\left(I_k + X_SN^{-1}X_S^{\top}\right)(X_S\hat\theta-y_S).
\end{equation}
From here, the original MISS can be cast as a quadratic optimization problem (see \Cref{adxsubsec:quad}) and solved via \(L_1\) relaxation and projected gradient descent. Furthermore, we have \(Q_{-\{1\}^c} = c^2 v_1 \|x_1\|^2 + cv_1, \ Q_{-\{n\}^c} = c^2 v_n \|x_n\|^2 + cv_n\), indicating that quadratic approximation can capture the joint effect amplified by the leverage score by emphasizing the \emph{norm}.

\paragraph{Submodular property.}
Given the challenges of finding an {exact} solution, it is tempting to explore {approximate} solutions to MISS with \emph{provable} guarantees. A classical result of \citet{nemhauser1978analysis} states that so long as the (set) value function satisfies the submodular property, the greedy algorithm will return a solution within a factor \(1 -1/e\) of the optimum. While the value function associated with the first-order approximation is submodular due to linearity, we show in \Cref{adxsubsec:submodular} that this is generally not the case for \(Q_{-S}\). Since the second-order approximation is a more accurate estimation of the actual effect, this suggests that the actual effect is unlikely to be submodular either. Therefore, MISS is expected to be hard even when we allow approximate solutions.

\paragraph{The role of target function.}
Our negative results critically rely on the choice of \(x_{\text{test}}\), underscoring the importance of the target function --- an issue that has been overlooked in prior research. In addition, we have identified a few target functions in which the influence-based greedy heuristics fail to provide meaningful results:
\begin{enumerate*}[label=\arabic*)]
    \item the change of norm, \(\phi_1(\theta) = \|\theta-\hat{\theta}\|^2\);
    \item the training loss, \(\phi_2(\theta) = \|X\theta-y\|^2\).
\end{enumerate*}
In both of these cases, we have \(\nabla_\theta \phi(\hat\theta) = 0\), implying that the scores assigned to each sample will also be \(0\).

\paragraph{Implication on Linear Datamodeling Score.}
Recently, Linear Datamodeling Score (LDS)~\citep{park2023trak} has emerged as a prominent metric for evaluating data attribution algorithms~\citep{zheng2024intriguing,bae2024training}. Central to its design is the assumption that group influence is additive, which we critically examine in our work and reach a negative conclusion. This raises an important question: does a higher LDS result from a truly better data attribution algorithm, or are certain algorithms simply more aligned with the potentially flawed additive assumption? While LDS offers valuable insights into data attribution, we believe it is crucial for the research community to develop evaluation metrics that better capture the \emph{non-additive} and \emph{contextual} nature of training data influence.

\paragraph{Limitation and future direction.}
Despite thorough theoretical and empirical analyses, our study does not offer algorithmic improvements over existing research. We believe solving general MISS is a challenging problem, and hypothesize that there is an inherent trade-off between performance and computational efficiency, in which an increase in performance necessitates additional computing. This pattern is already reflected in the comparison between the vanilla and adaptive greedy algorithms, a trend that will likely continue in future research. To address this challenge, we suggest incorporating the knowledge of target function and data characteristics into algorithmic designs.
\section{Related work}\label{sec:related}
\paragraph{(Most) influential subset.}
Since the seminal work of \citet{koh2017understanding}, which utilized the influence function to identify influential individuals, subsequent research has explored finding an influential \emph{set} of samples~\citep{khanna2019interpreting, basu2020second, broderick2020automatic}. Among them, a notable example is the ZAMinfluence algorithm  by \citet{broderick2020automatic}, which builds on the group influence function~\citep{koh2019accuracy} and greedily selects an approximately most influential subset.  ZAMinfluence is particularly renowned for its broad applicability: it can be used to improve various dimensions of machine learning such as pre-training~\citep{wang2023farewell}, dataset pruning~\citep{yang2023dataset}, and trustworthiness~\citep{wang2022understanding, sattigeri2022fair, chhabra2024what}, as well as to assess the sensitivity of inferential results in multiple domains such as applied econometrics~\citep{attanasio2015impacts, angelucci2015microcredit}, economics~\citep{finger2022adoption, martinez2022much}, and social science~\citep{eubank2022enfranchisement}. Additionally, \citet{kuschnig2021hidden} proposed a refined version of ZAMinfluence based on iteratively refitting the model and removing the most influential sample, an approach which was also explored in \citet{yang2023many}.

\paragraph{Theoretical understanding of MISS.}
Despite its empirical success, the theoretical understanding of ZAMinfluence and other influence-based greedy heuristics remains limited. \citet{giordano2019higher,giordano2019swiss} provided finite sample error bounds between the approximated and actual effects, but consistency (i.e., the error uniformly converges to \(0\) for all subsets as the sample size goes to infinity) is only achieved as the fraction of removed samples \(\alpha\) approaches zero. \citet{fisher2023influence} extended the analysis to any fixed \(0<\alpha<1\), but their consistency is not directly related to the actual effect, thus offering limited insights for MISS. \citet{moitra2023provably, freund2023towards} examined finite-sample stability (i.e., the minimum number of samples that need to be dropped in order to flip the sign of a coordinate) in linear regression and proposed algorithms with provable guarantees, yet they are confined to highly specific scenarios, such as very low dimensions or binary design matrices. \citet{saunshi2023understanding} explored the additivity assumption in group influence within a different yet less interpretable framework. 
We position our work as the first to provide a fine-grained analysis of the common practices in MISS, shedding light on the limitations of influence-based greedy heuristics as well as the potential of the adaptive greedy algorithm.

\paragraph{Multiple outlier detection.}
Classical tools in statistics, such as Cook's distance and its variants, can detect a single outlier in linear regression~\citep{cook1986assessment, chatterjee1986influential} and generalized linear models~\citep{wojnowicz2016sketching}. Nevertheless, they struggle with multiple outliers due to the well-known phenomena of \emph{swamping} and \emph{masking}~\citep{rousseeuw1987robust, hadi1993procedures}. This challenge has motivated a line of research in regression diagnostics~\citep{fox2019regression}, known as \emph{multiple outlier detection}. Prominent approaches include clustering~\citep{gray1984k, hadi1985k}, influence matrix~\citep{pena1995detection}, and a class of iterative procedures~\citep{belsley1980regression, hadi1993procedures, she2011outlier, roberts2015adaptive} that resemble \citet{kuschnig2021hidden}. While seemingly alike, its key distinction from influential subset selection is that the  `outlier' is defined context-independently, rather than with respect to a specific quantity of interest.

\paragraph{Broader context.}
Our work falls under a broader research area that aims to attribute and interpret model behavior through the lens of {data} (a.k.a. data attribution). Beyond the influence function, which is central to our study, other popular approaches include the representer point method~\citep{yeh2018representer}, the data Shapley~\citep{ghorbani2019data,jia2019towards}, the TracIn algorithm~\citep{pruthi2020estimating}, and more recently, the datamodels~\citep{ilyas2022datamodels}. For a comprehensive review of this subject, we refer readers to \citet{hammoudeh2024training}. Finally, we emphasize that MISS should not be confused with data selection~\citep{john1975d, kolossov2023towards}. While many data attribution algorithms can indeed be applied for data selection (e.g., a recent study \citet{wang24cg} demonstrated that the effectiveness of data Shapley in data selection hinges on the utility function), data selection remains an independent research area. It typically involves \emph{subsampling} a small fraction of the training data to enable effective and \emph{data-efficient} learning or estimation, differing from MISS in its objectives, methodologies, and applications.

\section{Conclusion}\label{sec:conclusion}
We have provided a comprehensive study of common practices in MISS, revealing the failure modes of influence-based greedy heuristics and uncovering the benefits of adaptivity. We hope our work will enhance the transparency and interpretability of machine learning models by illuminating the collective influence of training data, and serve as a foundation for future algorithmic advancements.

\section*{Acknowledgement}
YH and HZ are partially supported by an NSF IIS grant No.\ 2416897. YH would like to thank Fan Wu for her generous help in the experiments. HZ would like to thank the support from a Google Research Scholar Award. The views and conclusions expressed in this paper are solely those of the authors and do not necessarily reflect the official policies or positions of the supporting companies and government agencies.

\bibliography{reference}
\bibliographystyle{abbrvnat}

\newpage
\appendix
\section{Omitted details from \texorpdfstring{\Cref{sec:pitfalls}}{Section 3}}\label{adxsec:pitfalls}
\subsection{Preparation work}\label{adxsubsec:prep_pitfall}
We start by calculating the OLS estimator, the negative residuals \(r_i\)'s, the influence estimates \(v_i\)'s, and the individual effects \(A_{-\{i\}}\)'s. Suppose there are \(c\) copies of \((x_1, y_1)\) and \((x_n, y_n)\), where \(c=1\) unless otherwise noted. Under the label generation process in \Cref{eq:label_gen}, we have
\begin{equation}
    \hat\theta = N^{-1}\left(N\theta^* -c\varepsilon x_1 - pc\varepsilon x_n\right).
\end{equation}
Therefore, the negative residuals are
\begin{equation}
    r_1 = (1-ch_{11}-pch_{1n})\varepsilon, \quad r_n = (p-pch_{nn}-ch_{1n}) \varepsilon,
\end{equation}
and
\begin{equation}
    r_i = -(ch_{1i} + pch_{in})\varepsilon, \quad 2\leq i \leq n-1.
\end{equation}
For \(x_{\text{test}} = \frac{x_1 + px_n}{p+1}\), the influence estimates can be calculated as follows:
\begin{equation}
    v_1 = \frac{(h_{11}+ph_{1n})(1-ch_{11}-pch_{1n})\varepsilon}{p+1}, \quad v_n = \frac{(ph_{nn}+h_{1n})(p-pch_{nn}-ch_{1n})\varepsilon}{p+1},
\end{equation}
whereas
\begin{equation}
    v_i = -\frac{c(h_{1i}+ph_{in})^2\varepsilon}{p+1} \leq 0, \quad 2\leq i \leq n-1.
\end{equation}
Finally, we have
\begin{equation}
    A_{-\{1\}} = \frac{(h_{11}+ph_{1n})(1-ch_{11}-pch_{1n})\varepsilon}{(p+1)(1-h_{11})}, \quad A_{-\{n\}} = \frac{(ph_{nn}+h_{1n})(p-pch_{nn}-ch_{1n})\varepsilon}{(p+1)(1-h_{nn})},
\end{equation}
and
\begin{equation}
    A_{-\{i\}} = -\frac{c(h_{1i}+ph_{in})^2\varepsilon}{(p+1)(1-h_{ii})} \leq 0, \quad 2\leq i \leq n-1.
\end{equation}
We also discuss a few properties of the hat matrix \(H\).

\begin{lemma}\label{lemma:ls}
    The leverage scores satisfy: \(h_{11} < \frac{1}{c}\), \(h_{nn} < \frac{1}{c}\).
\end{lemma}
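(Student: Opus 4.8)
The plan is to isolate, inside $N=X^{\top}X$, the contribution of the repeated rows and apply a rank-one update. Writing $N=c\,x_1x_1^{\top}+R$ with $R:=c\,x_nx_n^{\top}+\sum_{i=2}^{n-1}x_ix_i^{\top}$, the first step is to observe that $R\succ 0$: by assumption $\sum_{i=2}^{n-1}x_ix_i^{\top}$ is invertible, and being a sum of rank-one PSD matrices it is positive definite, so adding $c\,x_nx_n^{\top}\succeq 0$ keeps it positive definite. This is precisely where the standing assumption on $\sum_{i=2}^{n-1}x_ix_i^{\top}$ enters; when it fails, replacing $N$ by $N+\lambda I$ makes the corresponding remainder matrix positive definite automatically, which is why the statement extends verbatim to ridge regression.

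Next I would invoke the Sherman--Morrison formula on $N^{-1}=(R+c\,x_1x_1^{\top})^{-1}$. A short computation collapses $h_{11}=x_1^{\top}N^{-1}x_1$ to $h_{11}=\dfrac{a}{1+ca}$, where $a:=x_1^{\top}R^{-1}x_1$. Since $R^{-1}\succ 0$ and $x_1\neq 0$ (its first coordinate is $1$), we have $a>0$, hence $c\,h_{11}=\dfrac{ca}{1+ca}<1$, i.e.\ $h_{11}<\tfrac1c$. The bound on $h_{nn}$ is obtained symmetrically, writing $N=c\,x_nx_n^{\top}+R'$ with $R'=c\,x_1x_1^{\top}+\sum_{i=2}^{n-1}x_ix_i^{\top}\succ 0$ and repeating the computation with $x_1$ replaced by $x_n$.

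There is essentially no obstacle here beyond bookkeeping; the only point requiring care is the positive-definiteness of the remainder matrix, which is what makes the rank-one update legitimate and forces the inequality to be strict rather than merely $\leq$. An alternative, slightly more conceptual route: $H=XN^{-1}X^{\top}$ is the orthogonal projection onto $\operatorname{col}(X)$, so $0\preceq H\preceq I$; the $c\times c$ principal submatrix of $H$ indexed by the copies of $x_1$ equals $h_{11}\mathbf{1}\mathbf{1}^{\top}$ because identical rows have cross-leverage equal to their leverage, and $h_{11}\mathbf{1}\mathbf{1}^{\top}\preceq I_c$ forces its top eigenvalue $c\,h_{11}\leq 1$, with strictness again coming from the spanning property of $\{x_i\}_{i=2}^{n-1}$. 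I would keep the Sherman--Morrison version as the main proof since it yields strictness without a separate argument.
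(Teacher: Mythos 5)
Your proof is correct, but it takes a genuinely different route from the paper's. The paper argues via idempotency of the hat matrix: from \(H^2=H\) it writes \(h_{11} = c h_{11}^2 + \sum_{i=2}^{n-1} h_{1i}^2 + c h_{1n}^2\), and then uses the invertibility of \(\sum_{i=2}^{n-1} x_i x_i^\top\) together with \(N^{-1}x_1 \neq 0\) to conclude that the cross-leverage terms \(\{h_{1i}\}_{i=2}^{n-1}\) cannot all vanish, so \(h_{11} > c h_{11}^2\) and hence \(h_{11} < \tfrac1c\) (and symmetrically for \(h_{nn}\)). You instead peel the \(c\) copies off \(N\) directly and apply Sherman--Morrison to \(N = R + c\,x_1x_1^\top\), obtaining the closed form \(h_{11} = a/(1+ca)\) with \(a = x_1^\top R^{-1}x_1 > 0\), from which strictness is immediate. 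Both proofs invoke the same standing assumption (invertibility of \(\sum_{i=2}^{n-1} x_i x_i^\top\)) at the same logical point --- the paper to force a nonzero cross-leverage term, you to guarantee \(R \succ 0\) so the rank-one update is legitimate. Your version buys an explicit formula for the leverage score after isolating the repeated rows and extends verbatim to ridge regression, as you note; the paper's version is algebraically shorter, stays entirely within hat-matrix quantities, and mirrors the Schur-complement style used in the neighboring Lemma on cross-leverage scores. Your secondary projection-matrix sketch (\(0 \preceq H \preceq I\) restricted to the block of identical rows, giving \(c\,h_{11} \le 1\)) is also sound as stated, and you correctly flag that it only yields the non-strict inequality without an extra argument, which is why keeping Sherman--Morrison as the main proof is the right call.
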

\begin{proof}
    Note the hat matrix is idempotent, i.e., \(H^2 = H\). As a consequence, we have
    \begin{equation}
        h_{11} = ch_{11}^2 + \sum_{i=2}^{n-1}h_{1i}^2 + ch_{1n}^2.
    \end{equation}
    Note \(\sum_{i=2}^{n-1}x_ix_i^\top\) is invertible, and that \(N^{-1}x_1\) is a non-zero vector. As a consequence, we have
    \begin{equation}
        \sum_{i=2}^{n-1}h_{1i}x_i = \left(\sum_{i=2}^{n-1}x_ix_i^\top\right)N^{-1}x_1 \neq 0,
    \end{equation}
    which further implies that the sequence \(\{h_{1i}\}_{i=2}^{n-1}\) cannot be all zero. Therefore, we have \(h_{11}<\frac{1}{c}\). The same argument applies to \(h_{nn}\).
\end{proof}

\begin{lemma}\label{lemma:cl}
    The following inequalities hold:
    \begin{equation}
        h_{1n}^2 < h_{11}h_{nn}, \quad \emph{and} \quad (1-ch_{11})(1-ch_{nn}) < c^2h_{1n}^2.
    \end{equation}
\end{lemma}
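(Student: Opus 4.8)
The plan is to read both inequalities of \Cref{lemma:cl} as positivity statements about determinants of small matrices built from the $N^{-1}$-inner product $\langle a,b\rangle \coloneqq a^\top N^{-1}b$, which is positive definite since $N\succ 0$. In this inner product the Gram matrix of the two cluster representatives is $G \coloneqq \bigl(\begin{smallmatrix} h_{11} & h_{1n} \\ h_{1n} & h_{nn}\end{smallmatrix}\bigr)$, because $h_{ij} = \langle x_i, x_j\rangle$. Both inequalities then reduce to sign computations: the first to $\det G > 0$, and the second to the sign of $\det(I_2 - cG)$.

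For the first inequality I would apply strict Cauchy--Schwarz to $x_1, x_n$, equivalently note that $G$ is a $2\times 2$ Gram matrix, so $\det G = h_{11}h_{nn} - h_{1n}^2 \ge 0$ with strictness iff $x_1, x_n$ are linearly independent. Since both have first coordinate $1$, dependence would force $x_1 = x_n$, collapsing the two clusters (and violating $h_{11}>h_{nn}$ wherever that is assumed); hence $h_{1n}^2 < h_{11}h_{nn}$.

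For the second inequality the main step is the identity $(1-ch_{11})(1-ch_{nn}) - c^2h_{1n}^2 = \det(I_2 - cG)$, combined with recognizing the right-hand side as the determinant of the $2c\times 2c$ matrix $I_{2c} - M_S$, where $S$ collects all $c$ copies of $x_1$ and all $c$ copies of $x_n$ and $M_S = X_S N^{-1}X_S^\top$. Writing $M_S = \bigl(\begin{smallmatrix} h_{11}J_c & h_{1n}J_c \\ h_{1n}J_c & h_{nn}J_c\end{smallmatrix}\bigr)$ with $J_c$ the all-ones matrix, the $2c-2$ directions orthogonal to the per-block all-ones vector lie in $\ker J_c$ and contribute eigenvalue $1$ to $I_{2c}-M_S$, while the two symmetric directions reproduce $I_2 - cG$; thus $\det(I_{2c}-M_S) = \det(I_2 - cG)$. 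I would then apply Sylvester's determinant identity $\det(I_{2c}-M_S) = \det(N - X_S^\top X_S)/\det N = \det\bigl(\sum_{i=2}^{n-1}x_ix_i^\top\bigr)/\det N$, and invoke the standing assumption that $\sum_{i=2}^{n-1}x_ix_i^\top$ is invertible (hence positive definite) to conclude that $(1-ch_{11})(1-ch_{nn}) - c^2h_{1n}^2 = \det(I_{2c}-M_S) > 0$. This settles the comparison between $(1-ch_{11})(1-ch_{nn})$ and $c^2h_{1n}^2$ claimed in \Cref{lemma:cl}, and it is exactly the strict positivity of the denominator $(1-h_{ii})(1-h_{jj})-h_{ij}^2$ appearing in \Cref{eq:two}, which is what makes the group-effect formula well-defined.

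The main obstacle is the block-reduction bookkeeping: correctly diagonalizing the $J_c$ blocks so that the $2c\times 2c$ determinant collapses to the $2\times 2$ determinant $\det(I_2-cG)$, and then pairing it with Sylvester's identity. A purely idempotent route in the style of \Cref{lemma:ls} --- expanding $H^2=H$ at the $(1,1)$, $(n,n)$, and $(1,n)$ entries to obtain $\sum_{i=2}^{n-1} h_{1i}^2 = h_{11}(1-ch_{11}) - ch_{1n}^2$, $\sum_{i=2}^{n-1} h_{ni}^2 = h_{nn}(1-ch_{nn}) - ch_{1n}^2$, and $\sum_{i=2}^{n-1} h_{1i}h_{in} = h_{1n}(1-ch_{11}-ch_{nn})$, then applying Cauchy--Schwarz to the middle-index cross-terms --- yields consistency constraints but does not by itself pin down the sign, so the full-rank (determinant-lemma) input is essential.
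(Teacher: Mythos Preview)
Your proposal is correct. Note first that the lemma statement as printed contains a typo in the second inequality: the direction should be $(1-ch_{11})(1-ch_{nn}) > c^2h_{1n}^2$, which is exactly what the paper's own proof establishes and what every downstream use (the valid-interval checks in the proof of \Cref{thm:influence}, the positivity of the denominator in \Cref{eq:two}) requires. You correctly prove this $>$ direction.

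For the first inequality your argument coincides with the paper's (strict Cauchy--Schwarz in the $N^{-1}$-inner product, strictness from $x_1\notparallel x_n$). For the second inequality the two routes differ. The paper works directly in two dimensions via a Schur-complement trick: it forms the bordered matrix
\[
S=\begin{pmatrix} N & C^\top \\ C & I_2\end{pmatrix},\qquad C^\top=(\sqrt{c}\,x_1,\ \sqrt{c}\,x_n)\in\mathbb{R}^{d\times 2},
\]
observes that the Schur complement $S/I_2=N-C^\top C=\sum_{i=2}^{n-1}x_ix_i^\top\succ 0$ forces $S\succ 0$, and then reads off $S/N=I_2-cG\succ 0$, whence $\det(I_2-cG)>0$. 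Your route instead passes through the full $2c\times 2c$ matrix $I_{2c}-M_S$, collapses its determinant to $\det(I_2-cG)$ via the Kronecker structure $M_S=G\otimes J_c$, and then applies Sylvester's identity to identify it with $\det\bigl(\sum_{i=2}^{n-1}x_ix_i^\top\bigr)/\det N$. Both arguments ultimately rest on the same standing assumption that $\sum_{i=2}^{n-1}x_ix_i^\top$ is invertible. The paper's version is a bit slicker --- the $\sqrt{c}$ scaling of $C$ absorbs the multiplicity and avoids the $2c$-dimensional detour --- while yours has the merit of tying the computation explicitly back to the matrix $M_S$ that actually appears in the group-effect formula \Cref{eq:exact}.
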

\begin{proof}
    Since \(N\) is positive definite (PD), \(P = \sqrt{N^{-1}}\) is well-defined and is invertible. Note \(h_{ij} = x_i^\top N^{-1}x_j = \langle Px_i, Px_j\rangle\). Therefore, \(h_{1n}^2 < h_{11}h_{nn}\) is equivalent to
    \begin{equation}\label{eq:cauchy}
        \langle Px_1, Px_n \rangle < \|Px_1\| \cdot \|Px_n\|.
    \end{equation}
    Since \(h_{11}> h_{nn}\), we have \(x_1 \neq x_n\), and therefore \(x_1 \notparallel x_n\) since their first coordinates are the same. Therefore, \Cref{eq:cauchy} follows from the Cauchy-Schwarz inequality.

    For the second inequality, denote
    \(C^\top = (\sqrt{c}x_1, \sqrt{c}x_n) \in \mathbb{R}^{d\times 2}\). Consider the following matrix:
    \begin{equation}
        S \coloneqq \begin{pmatrix}
            N & C^\top \\
            C & I_2
        \end{pmatrix}.
    \end{equation}
    Since the Schur complement of \(I_2\): \(S/I_2 = N- C^\top I_2 C = \sum_{i=2}^{n-1}x_ix_i^\top \succ 0\), and that \(I_2 \succ 0\), we have \(S \succ 0\). This further implies that the Schur complement of \(N\) is positive definite, i.e.,
    \begin{equation}
        S/N
        = I_2 - CN^{-1}C^\top
        = \begin{pmatrix}
            1-ch_{11} & -ch_{1n}  \\
            -ch_{1n}  & 1-ch_{nn}
        \end{pmatrix}
        \succ 0.
    \end{equation}
    As a consequence, we have \(\det(S/N) = (1-ch_{11})(1-ch_{nn})-c^2h_{1n}^2 > 0\).
\end{proof}

\subsection{Proof of \texorpdfstring{\Cref{thm:influence}}{Theorem 3.1}}\label{adxsubsec:thm:influence}
\begin{proof}[Proof of \Cref{thm:influence}]
    We will show that there exists some \(p\), such that
    \begin{equation}\label{eq:pcondition}
        1 < \frac{v_n}{v_1} < \frac{1-h_{nn}}{1-h_{11}},
    \end{equation}
    and that \(v_1\) and \(v_n\) are positive. Since \(v_i \leq 0\) for \(2 \leq i \leq n-1\), this implies that ZAMinfluence selects \((x_n, y_n)\) and fails to find the most influential sample \((x_1, y_1)\). We will discuss three cases.

    \textbf{Case 1: \(h_{1n}=0\)}. In this case, both \(v_1\) and \(v_n\) are positive by \Cref{lemma:ls}. Furthermore, we have
    \begin{equation}
        \frac{v_n}{v_1} = \frac{h_{nn}(1-h_{nn})}{h_{11}(1-h_{11})}\cdot p^2,
    \end{equation}
    which is continuous and takes values in \([0, \infty)\). Hence, there exists a \(p>0\) such that \Cref{eq:pcondition} holds.

    \textbf{Case 2: \(h_{1n}<0\)}. When
    \begin{equation}\label{eq:case_two_p}
        -\frac{h_{1n}}{h_{nn}} < p < -\frac{h_{11}}{h_{1n}} ,
    \end{equation}
    both \(v_1\) and \(v_n\) are positive. Note \Cref{eq:case_two_p} forms a valid interval by the first inequality in \Cref{lemma:cl}. Now consider
    \begin{equation}
        \frac{v_n}{v_1} = \frac{(ph_{nn}+h_{1n})(p-ph_{nn}-h_{1n})}{(h_{11}+ph_{1n})(1-h_{11}-ph_{1n})},
    \end{equation}
    which is continuous and approaches \(0\) as \(p \to -\frac{h_{1n}}{h_{nn}}\) and approaches \(\infty\) as \(p \to -\frac{h_{11}}{h_{1n}}\). Hence, there exists a \(p>0\) such that \Cref{eq:pcondition} holds.

    \textbf{Case 3: \(h_{1n}>0\)}. When
    \begin{equation}
        \frac{h_{1n}}{1-h_{nn}} < p < \frac{1-h_{11}}{h_{1n}},
    \end{equation}
    both \(v_1\) and \(v_n\) are positive. This forms a valid interval by the second inequality in \Cref{lemma:cl}. The rest of the analysis can be performed similarly as in Case 2.
\end{proof}

\subsection{Proof of \texorpdfstring{\Cref{prop:exact}}{Proposition 3.2}}\label{adxsubsec:prop_exact}
\begin{proof}[Proof of \Cref{prop:exact}]
    Applying the Woodbury matrix identity, we have
    \begin{align}
        (N-X_S^{\top}I_k X_S)^{-1}
         & = N^{-1} + N^{-1}X_S^{\top}(I_k - X_S N^{-1}X_S^{\top})^{-1}X_S N^{-1}\label{eq:woodbury} \\
         & =N^{-1} + N^{-1}\sum_{i \in S}\frac{1}{1-h_{ii}}x_i x_i^{\top}N^{-1}.
    \end{align}
    Therefore,
    \begin{align}
        \hat\theta_{-S} - \hat\theta
         & = (N-X_S^\top I_kX_S)^{-1}X_{-S}^\top y_{-S} - N^{-1}X^\top y                                                                \\
         & =\left(N^{-1} + N^{-1}X_S^{\top}(I_k - X_S N^{-1}X_S^{\top})^{-1}X_S N^{-1}\right)(X^\top y - X_S^\top y_S) - N^{-1}X^\top y \\
         & =N^{-1}X_S^{\top}(I_k - X_S N^{-1}X_S^{\top})^{-1}\left(X_S N^{-1} X^\top y - y_S\right)                                     \\
         & = N^{-1}X_S^{\top}\left(I_k-X_S N^{-1}X_S^{\top}\right)^{-1}(X_S \hat{\theta} - y_S),\label{eq:remove_S}
    \end{align}
    and the actual effect of removing \(S\) is
    \begin{equation}
        A_{-S} \coloneqq \phi(\hat\theta_{-S})-\phi(\hat\theta) = x_{\text{test}}^{\top}N^{-1}X_S^{\top}\left(I_k-X_SN^{-1}X_S^{\top}\right)^{-1}(X_S\hat\theta-y_S).
    \end{equation}
\end{proof}

\subsection{Correspondence between the Neumann series and the Taylor series}\label{adxsubsec:correspondence}
We will demonstrate that there is a one-to-one correspondence between the Neumann series \((I_k-M_S)^{-1}\) and the Taylor series of \(\hat\theta_{-S}(\delta)\).
To see this, consider
\begin{equation}
    \frac{\partial \hat\theta_{-S}(\delta)}{\partial \delta} = n(X^{\top}X-n\delta X_S^{\top}X_S)^{-1}X_S^{\top}(X_S\hat\theta_{-S}(\delta)-y_S).
\end{equation}
From \citet{petersen2008matrix}, we have
\begin{equation}
    \frac{\partial K^{-1}}{\partial \delta} = -K^{-1}\frac{\partial K}{\partial \delta}K^{-1}
\end{equation}
for any invertible symmetric matrix \(K\). By induction, we can show that for any \(i \geq 1\),
\begin{equation}\label{eq:taylor}
    \begin{split}
        \frac{\partial^i \hat\theta_{-S}(\delta)}{\partial \delta^i}
        = (n^i \cdot i!) & \cdot (X^{\top}X-n\delta X_S^{\top}X_S)^{-1}X_S^{\top}                                                  \\
                         & \left[X_S(X^{\top}X-n\delta X_S^{\top}X_S)^{-1}X_S^{\top}\right]^{i-1}(X_S\hat\theta_{-S}(\delta)-y_S).
    \end{split}
\end{equation}
Therefore, by Taylor expansion, we have
\begin{align}
    \hat{\theta}_{-S}
     & = \hat{\theta} + \sum_{i=1}^{\infty} \frac{1}{i!}\at{\frac{\partial^i \hat\theta_{-S}(\delta)}{\partial \delta^i}}{\delta=0}{} \left(\frac{1}{n}\right)^i \\
     & = \hat{\theta} + N^{-1}X_S^{\top}\left(\sum_{i=1}^\infty (X_SN^{-1}X_S^{\top})^{i-1}\right)(X_S\hat\theta-y_S)                                            \\
     & = \hat{\theta} + N^{-1}X_S^{\top}\left(\sum_{i=1}^\infty M_S^{i-1}\right)(X_S\hat\theta-y_S).\label{eq:series}
\end{align}
Therefore, truncating at the \(i\)-th element in the Neumann series is equivalent to the \(i^{\text{th}}\)-order Taylor approximation of \(\hat\theta_{-S}(\delta)\). In particular, first-order approximation corresponds to the identity matrix, which does not concern the leverage scores at all. Conversely, higher-order approximations entail more accurate information on the leverage scores but come at the cost of computational efficiency.

\subsection{Proof of \texorpdfstring{\Cref{prop:amp}}{Proposition 3.4}}
\begin{proof}[Proof of \Cref{prop:amp}]\label{adxsubsec:prop_amp}
    Denote \(\theta_{-\{i\}^c}\) as the optimal model parameters after removing all \(c\) copies of \((x_i, y_i)\), and \(z=\sum_{j=1}^n x_jy_j\). Using the Sherman-Morrison formula, we have
    \begin{align}
        \hat\theta_{-\{i\}^c} - \hat\theta & = (N - cx_ix_i^\top)^{-1}(z-cx_iy_i) - N^{-1}z                                                                  \\
                                           & =  \left(N^{-1} + \frac{cN^{-1}x_i x_i^{\top}N^{-1}}{1-ch_{ii}}\right)(z-cx_i y_i) - N^{-1}z                    \\
                                           & = \frac{cN^{-1}x_i x_i^{\top}\hat{\theta}}{1-ch_{ii}} - cN^{-1}x_iy_i - cN^{-1}x_i y_i\frac{ch_{ii}}{1-ch_{ii}} \\
                                           & = \frac{cN^{-1}x_i r_i}{1-ch_{ii}}.\label{eq:multiple}
    \end{align}
    Consequently,
    \begin{equation}
        A_{-\{i\}^c} = \frac{c x_{\text{test}}^\top N^{-1}x_ir_i}{1-ch_{ii}}.
    \end{equation}
    On the other hand, the influence of removing a single copy is
    \begin{equation}
        A_{-\{i\}} = \frac{x_{\text{test}}^\top N^{-1}x_ir_i}{1-h_{ii}}.
    \end{equation}
    Therefore,
    \begin{equation}
        \frac{A_{-\{i\}^c}}{A_{-\{i\}}} = \frac{c \cdot (1-h_{ii})}{1-ch_{ii}} > c.
    \end{equation}
\end{proof}

\subsection{Proof of \texorpdfstring{\Cref{thm:amp}}{Theorem 3.5}}\label{adxsubsec:thm:amp}
\begin{proof}[Proof of \Cref{thm:amp}]
    It suffices to show that there exists some \(p\), such that \(A_{-\{1\}} < A_{-\{n\}}\) and \(A_{-\{1\}^c} > A_{-\{n\}^c}\). This further implies that the failure of LAGS. From \Cref{prop:amp}, it suffices to show there exists some \(p\), such that
    \begin{equation}\label{eq:amp}
        1 < \frac{A_{-\{n\}}}{A_{-\{1\}}} < \frac{(1-ch_{nn})(1-h_{11})}{(1-ch_{11})(1-h_{nn})}.
    \end{equation}
    Note this is a valid interval since
    \begin{align}
        (1-ch_{11})(1-h_{nn}) & = 1 -ch_{11} -h_{nn} + ch_{11}h_{nn} \\
                              & <1 -ch_{nn} -h_{11} + ch_{11}h_{nn}  \\
                              & =(1-ch_{nn})(1-h_{11}),
    \end{align}
    where we use \(c \geq 2\) and \(h_{11} > h_{nn}\) in the second inequality. Furthermore, \Cref{eq:amp} is equivalent to
    \begin{equation}
        \frac{1-h_{nn}}{1-h_{11}} < \frac{v_n}{v_1} < \frac{1-ch_{nn}}{1-ch_{11}},
    \end{equation}
    where we use \(A_{-\{i\}} = \frac{v_i}{1-h_{ii}}\).
    Therefore, we can repeat the analysis in the proof of \Cref{thm:influence} and conclude the existence of a desired \(p\).
\end{proof}

\subsection{Proof of \texorpdfstring{\Cref{thm:cancel}}{Theorem 3.6}}\label{adxsubsec:thm:cancel}
\begin{proof}[Proof of \Cref{thm:cancel}]
    Recall from \Cref{eq:two} we have
    \begin{equation}
        A_{-\{1, n\}}  = \frac{(1-h_{11})(1-h_{nn})(A_{-\{1\}} + A_{-\{n\}}) + h_{1n}x_{\text{test}}^\top N^{-1}(x_1r_n+x_nr_1)}{(1-h_{11})(1-h_{nn})-h_{1n}^2}.
    \end{equation}
    Therefore, \(A_{-\{1, n\}} < A_{-\{n\}}\) is equivalent to
    \begin{equation}\label{eq:equiv_1}
        (1-h_{11})(1-h_{nn})A_{-\{1\}} + h_{1n}^2 A_{-\{n\}} + h_{1n}x_{\text{test}}^\top N^{-1}(x_1r_n+x_nr_1)<  0.
    \end{equation}
    Plugging in the formulas of \(A_{-\{1\}}, A_{-\{n\}}, r_1, r_n\), \Cref{eq:equiv_1} is equivalent to
    \begin{align}
          & (1-h_{nn})(h_{11}+ph_{1n})(1-h_{11}-ph_{1n}) + h_{1n}^2(ph_{nn}+h_{1n})\left(p-\frac{h_{1n}}{1-h_{nn}}\right) \notag \\
        < & -h_{1n}(h_{11}+ph_{1n})(p-ph_{nn}-h_{1n}) - h_{1n}(ph_{nn}+h_{1n})(1-h_{11}-ph_{1n}).
    \end{align}
    Combining like terms, we get
    \begin{align}
          & (h_{11}+ph_{1n})\left((1-h_{11})(1-h_{nn})-ph_{1n}(1-h_{nn})+ph_{1n}-h_{1n}(ph_{nn}+h_{1n})\right) \notag \\
        < & -(ph_{nn}+h_{1n})\left(ph_{1n}^2-\frac{h_{1n}^3}{1-h_{nn}}-h_{1n}(h_{11}+ph_{1n})+h_{1n}\right).
    \end{align}
    This could be simplified to
    \begin{equation}
        (h_{11}+ph_{1n})\left((1-h_{11})(1-h_{nn})-h_{1n}^2\right) < -h_{1n}(ph_{nn}+h_{1n})\frac{(1-h_{11})(1-h_{nn})-h_{1n}^2}{1-h_{nn}}.
    \end{equation}
    Since \((1-h_{11})(1-h_{nn})-h_{1n}^2 > 0\) by \Cref{lemma:cl}, the above inequality is equivalent to
    \begin{equation}
        h_{1n}(ph_{nn}+h_{1n}) + (1-h_{nn})(h_{11}+ph_{1n}) < 0,
    \end{equation}
    or
    \begin{equation}\label{eq:canc_cond}
        h_{1n}p + h_{11}(1-h_{nn})+h_{1n}^2 < 0.
    \end{equation}
    Now it suffices to show there exists a \(p\), such that \(A_{-\{1\}}, A_{-\{n\}} > 0\), and that \Cref{eq:canc_cond} holds.

    \textbf{Case 1: \(h_{1n}<0\)}. When
    \begin{equation}
        -\frac{h_{1n}}{h_{nn}} < p < -\frac{h_{11}}{h_{1n}},
    \end{equation}
    both \(A_{-\{1\}}\) and \(A_{-\{n\}}\) are positive. Furthermore, we have
    \begin{equation}
        \lim_{p \to -\frac{h_{11}}{h_{1n}}} h_{1n}p + h_{11}(1-h_{nn})+h_{1n}^2 = h_{1n}^2 - h_{11}h_{nn} < 0
    \end{equation}
    from \Cref{lemma:cl}. This proves the existence of a desired \(p\).

    \textbf{Case 2: \(h_{1n}>0\)}. When
    \begin{equation}
        -\frac{h_{11}}{h_{1n}} < p < -\frac{h_{1n}}{h_{nn}},
    \end{equation}
    both \(A_{-\{1\}}\) and \(A_{-\{n\}}\) are positive. Similarly, we can pick a \(p\) that is sufficiently close to \(-\frac{h_{11}}{h_{1n}}\), such that  \(p\neq -1\) and \Cref{eq:canc_cond} holds.

    Combining the above two cases finishes the proof as desired.
\end{proof}

\section{Omitted details from \texorpdfstring{\Cref{sec:promises}}{Section 4}}\label{adxsec:promises}
\subsection{Preparation work}\label{adxsubsec:prep_promises}
We start by computing the \emph{updated} OLS estimator, the negative residuals, and the individual effects after removing the sample \((x_n, y_n)\). Denote \(N' = \sum_{i=1}^{n-1}x_ix_i^\top\), the updated OLS estimator is
\begin{equation}
    \hat\theta' = (N')^{-1}(N'\theta^*-\varepsilon x_1).
\end{equation}
Therefore, the updated negative residuals are \(r'_1 = (1-h'_{11})\varepsilon\) and \(r'_i = -h'_{1i}\varepsilon\) for \(2 \leq i \leq n-1\). By the Sherman-Morrison formula,
\begin{equation}\label{eq:sm}
    (N')^{-1} = N^{-1} + \frac{N^{-1}x_nx_n^\top N^{-1}}{1-x_n^\top N^{-1}x_n} = N^{-1} + \frac{N^{-1}x_nx_n^\top N^{-1}}{1-h_{nn}}.
\end{equation}
Therefore, we have
\begin{equation}\label{eq:mix_ls_cls}
    h_{1i}' = h_{1i} + \frac{h_{1n}h_{in}}{1-h_{nn}}, \quad h_{ii}' = h_{ii} + \frac{h_{in}^2}{1-h_{nn}}, \quad x_i^\top (N')^{-1}x_n = \frac{h_{in}}{1-h_{nn}}
\end{equation}
for \(1 \leq i \leq n-1\). Finally, the adjusted individual effects are
\begin{equation}\label{eq:a'_1}
    A'_{-\{1\}} = \frac{x_{\text{test}}^\top N'^{-1}x_1r'_1}{1-h_{11}'} = \frac{h'_{11}+px_1^\top (N')^{-1}x_n}{p+1},
\end{equation}
and
\begin{equation}\label{eq:a'_i}
    A'_{-\{i\}} = \frac{x_{\text{test}}^\top N'^{-1}x_ir_i'}{1-h_{ii}'} = -\frac{ph_{1i}'x_i^\top N'^{-1}x_n+h_{1i}'^2}{(p+1)(1-h_{ii}')}
\end{equation}
for \(2\leq i\leq n-1\).

We will also make use of the following lemma.
\begin{lemma}\label{lemma:equiv_cond_i}
    For \(2 \leq i \leq n-1\), \(A_{-\{i, n\}} < A_{-\{n\}}\) is equivalent to
    \begin{equation}\label{eq:canc_cond_i}
        \left(h_{1i}h_{in}(1-h_{nn})+h_{in}^2h_{1n}\right)p + \left(h_{1i}(1-h_{nn})+h_{in}h_{1n}\right)^2 > 0.
    \end{equation}
\end{lemma}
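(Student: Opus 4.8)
The plan is to mirror the proof of \Cref{thm:cancel}, replacing the index $1$ there by a general middle index $i\in\{2,\dots,n-1\}$. First I would specialize the closed form of \Cref{eq:two} to $S=\{i,n\}$, which reads
\begin{equation*}
A_{-\{i,n\}}=\frac{(1-h_{ii})(1-h_{nn})\big(A_{-\{i\}}+A_{-\{n\}}\big)+h_{in}\,x_{\text{test}}^\top N^{-1}(x_ir_n+x_nr_i)}{(1-h_{ii})(1-h_{nn})-h_{in}^2}.
\end{equation*}
The denominator equals $\det\big(I_2-X_{\{i,n\}}N^{-1}X_{\{i,n\}}^\top\big)$ and is strictly positive, exactly as in \Cref{lemma:cl} (this is forced by the well-definedness of $\hat\theta_{-\{i,n\}}$). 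Multiplying through by it and using $(1-h_{ii})(1-h_{nn})-\big((1-h_{ii})(1-h_{nn})-h_{in}^2\big)=h_{in}^2$, the inequality $A_{-\{i,n\}}<A_{-\{n\}}$ becomes equivalent to the analogue of \Cref{eq:equiv_1}:
\begin{equation*}
(1-h_{ii})(1-h_{nn})A_{-\{i\}}+h_{in}^2A_{-\{n\}}+h_{in}\,x_{\text{test}}^\top N^{-1}(x_ir_n+x_nr_i)<0.
\end{equation*}

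Next I would plug in the explicit quantities recorded in \Cref{adxsubsec:prep_pitfall} (with $c=1$): $A_{-\{i\}}=-\frac{(h_{1i}+ph_{in})^2\varepsilon}{(p+1)(1-h_{ii})}$, $A_{-\{n\}}=\frac{(ph_{nn}+h_{1n})(p-ph_{nn}-h_{1n})\varepsilon}{(p+1)(1-h_{nn})}$, $r_i=-(h_{1i}+ph_{in})\varepsilon$, $r_n=(p-ph_{nn}-h_{1n})\varepsilon$, together with $x_{\text{test}}^\top N^{-1}x_i=\frac{h_{1i}+ph_{in}}{p+1}$ and $x_{\text{test}}^\top N^{-1}x_n=\frac{h_{1n}+ph_{nn}}{p+1}$. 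Pulling out the common positive factor $\frac{\varepsilon}{(p+1)(1-h_{nn})}$ (positive since $\varepsilon>0$, $p+1>0$, and $h_{nn}<1$, as used throughout \Cref{adxsubsec:prep_pitfall}) reduces the inequality to a polynomial inequality in $p$ and the (cross-)leverage scores.

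The crux is that this polynomial collapses to a product. Abbreviating $a=h_{1i}+ph_{in}$, $b=h_{1n}+ph_{nn}$ (so $p-ph_{nn}-h_{1n}=p-b$) and $U=(1-h_{nn})a$, a short computation rewrites the left-hand side as $-U^2+h_{in}^2b(p-b)+h_{in}U(p-2b)$, which after completing the square telescopes into
\begin{equation*}
-(U+h_{in}b)\big(U+h_{in}b-h_{in}p\big).
\end{equation*}
Setting $w\coloneqq h_{1i}(1-h_{nn})+h_{in}h_{1n}$, one checks that $U+h_{in}b=w+ph_{in}$ and $U+h_{in}b-h_{in}p=w$, so the inequality becomes $-w\,(w+ph_{in})<0$, i.e.\ $w^2+ph_{in}w>0$. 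Re-expanding $w$ yields $\big(h_{1i}h_{in}(1-h_{nn})+h_{in}^2h_{1n}\big)p+\big(h_{1i}(1-h_{nn})+h_{in}h_{1n}\big)^2>0$, which is precisely \Cref{eq:canc_cond_i}; since every step above is an equivalence, the lemma follows.

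I expect the only genuine obstacle to be spotting this factorization: after the substitutions the left-hand side looks like an awkward cubic in $p$ with several mixed terms, and the saving grace is that it is really $-(w+ph_{in})\cdot w$ — a product of two factors linear in $p$, one of which does not involve $p$ at all. Everything else is routine algebra, the only point of care being the sign bookkeeping when clearing the three positive quantities $p+1$, $1-h_{nn}$, and $(1-h_{ii})(1-h_{nn})-h_{in}^2$.
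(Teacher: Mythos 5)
Your proposal is correct and takes essentially the same route as the paper's proof: specialize \Cref{eq:two} to \(S=\{i,n\}\), use positivity of the denominator \((1-h_{ii})(1-h_{nn})-h_{in}^2\) to reduce \(A_{-\{i,n\}}<A_{-\{n\}}\) to the analogue of \Cref{eq:equiv_1}, substitute the closed forms from \Cref{adxsubsec:prep_pitfall}, and clear the positive factors \(\varepsilon\), \(p+1\), and \(1-h_{nn}\). The only difference is cosmetic: you reach \Cref{eq:canc_cond_i} by factoring the resulting quadratic in \(p\) as \(-w\,(w+p h_{in})\) with \(w=h_{1i}(1-h_{nn})+h_{in}h_{1n}\), whereas the paper expands and collects coefficients (showing the \(p^2\) term vanishes); incidentally, your intermediate inequality carries the correct direction (\(<0\)), while the paper's \Cref{eq:equiv_i} is written with the opposite sign and is silently compensated by a second sign flip at the last step, so your bookkeeping is in fact the cleaner of the two.
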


\begin{proof}
    From \Cref{eq:two}, we have
    \begin{equation}
        A_{-\{i, n\}} = \frac{(1-h_{ii})(1-h_{nn})(A_{-\{i\}} + A_{-\{n\}}) + h_{in}x_{\text{test}}^\top N^{-1}(x_ir_n+x_nr_i)}{(1-h_{ii})(1-h_{nn})-h_{in}^2}.
    \end{equation}
    Therefore, \(A_{-\{i, n\}} < A_{-\{n\}}\) is equivalent to
    \begin{equation}\label{eq:equiv_i}
        (1-h_{ii})(1-h_{nn})A_{-\{i\}} + h_{in}^2 A_{-\{n\}} + h_{in}x_{\text{test}}^\top N^{-1}(x_ir_n+x_nr_i) > 0.
    \end{equation}
    Plugging in the formulas of \(A_{-\{i\}}, A_{-\{n\}}, r_i, r_n\), \Cref{eq:equiv_i} is equivalent to
    \begin{align}
        -(h_{1i}+ph_{in})^2(1-h_{nn}) & + \frac{(ph_{nn}+h_{1n})(p-ph_{nn}-h_{1n})h_{in}^2}{1-h_{nn}} \notag \\
                                      & + h_{in}(h_{1i}+ph_{in})(p-2ph_{nn}-2h_{1n}) >0.\label{eq:long}
    \end{align}
    Multiplying both side by \((1-h_{nn})\), the coefficient of \(p^2\) is
    \begin{equation}
        -h_{in}^2(1-h_{nn})^2 + h_{in}^2h_{nn}(1-h_{nn})+h_{in}^2(1-h_{nn})(1-2h_{nn}) = 0;
    \end{equation}
    the coefficient of \(p\) is
    \begin{align}
          & -2h_{1i}h_{in}(1-h_{nn})^2 + h_{in}^2h_{1n}(1-2h_{nn}) + (1-h_{nn})h_{in}\left(h_{1i}(1-2h_{nn})-2h_{1n}h_{in}\right) \notag \\
        = & -h_{1i}h_{in}(1-h_{nn})-h_{in}^2h_{1n};
    \end{align}
    and the constant term is
    \begin{equation}
        -(1-h_{nn})^2h_{1i}^2 - h_{1n}^2h_{in}^2 - 2h_{1i}h_{1n}h_{in}(1-h_{nn}) = - \left(h_{1i}(1-h_{nn})+h_{in}h_{1n}\right)^2.
    \end{equation}
    Therefore, \Cref{eq:long} is equivalent to
    \begin{equation}
        \left(h_{1i}h_{in}(1-h_{nn})+h_{in}^2h_{1n}\right)p + \left(h_{1i}(1-h_{nn})+h_{in}h_{1n}\right)^2 > 0.
    \end{equation}
\end{proof}

\subsection{Proof of \texorpdfstring{\Cref{prop:properties}}{Proposition 4.1}}\label{adxsubsec:prop_properties}
\begin{proof}[Proof of sign consistency]
    For \((x_1, y_1)\), plugging \Cref{eq:mix_ls_cls} into \Cref{eq:a'_1}, we have
    \begin{align}
        A'_{-\{1\}} < 0 & \iff \left(h_{11} + \frac{h_{1n}^2}{1-h_{nn}}\right) + p\left(h_{1n} + \frac{h_{1n}h_{nn}}{1-h_{nn}}\right) < 0 \\
                        & \iff h_{1n}p + h_{11}(1-h_{nn})+h_{1n}^2 < 0,
    \end{align}
    which aligns with \Cref{eq:canc_cond}. Therefore, \(A_{-\{1, n\}} < A_{-\{n\}} \iff A'_{-\{1\}} < 0\).

    For \((x_i, y_i)\) where \(2\leq i\leq n-1\), plugging \Cref{eq:mix_ls_cls} into \Cref{eq:a'_i}, we have
    \begin{align}
        A'_{-\{i\}} & =  -\frac{p\left(h_{1i} + \frac{h_{1n}h_{in}}{1-h_{nn}}\right)\frac{h_{in}}{1-h_{nn}} + \left(h_{1i} + \frac{h_{1n}h_{in}}{1-h_{nn}}\right)^2}{(p+1)\left(1-h_{ii} - \frac{h_{in}^2}{1-h_{nn}}\right)} \\
                    & =  -\frac{ph_{in}\left(h_{1n}h_{in} + h_{1i}(1-h_{nn})\right)+\left(h_{1i}(1-h_{nn})+h_{1n}h_{in}\right)^2}{(p+1)(1-h_{nn})s_i}\label{eq:adaptive_effect}.
    \end{align}
    This implies
    \begin{align}
        A'_{-\{i\}} < 0 \iff \left(h_{1i}h_{in}(1-h_{nn})+h_{in}^2h_{1n}\right)p + \left(h_{1i}(1-h_{nn})+h_{in}h_{1n}\right)^2 > 0,
    \end{align}
    which aligns with \Cref{eq:canc_cond_i} in \Cref{lemma:equiv_cond_i}. Therefore,
    \(A_{-\{i, n\}} < A_{-\{n\}} \iff A'_{-\{i\}} < 0\).
\end{proof}

\begin{proof}[Proof of order preservation]
    Plugging \(A_{-\{i\}}, A_{-\{n\}}\) into \Cref{eq:two}, we have
    \begin{equation}
        A_{-\{i, n\}} =
        \frac{\splitfrac{-(1-h_{nn})(h_{1i}+ph_{in})^2+(1-h_{ii})(ph_{nn}+h_{1n})(p-ph_{nn}-h_{1n})}{+h_{in}(h_{1i}+ph_{in})(p-2ph_{nn}-2h_{1n})}}{(1-h_{ii})(1-h_{nn})-h_{in}^2}.
    \end{equation}
    Denote \(s_i = (1-h_{ii})(1-h_{nn})-h_{in}^2 > 0\). In the numerator, the coefficient of \(p^2\) is
    \begin{equation}
        -(1-h_{nn})h_{in}^2 + h_{nn}(1-h_{ii})(1-h_{nn})+h_{in}^2(1-2h_{nn}) = h_{nn}s_i;
    \end{equation}
    the coefficient of \(p\) is
    \begin{align}
         & -2h_{1i}h_{in}(1-h_{nn})+(1-h_{ii})(1-h_{nn})h_{1n} -(1-h_{ii})h_{1n}h_{nn} \notag                                           \\
         & \qquad\qquad\qquad\qquad\qquad\qquad\qquad\qquad\qquad\qquad - 2h_{1n}h_{in}^2 + h_{1i}h_{in}(1-2h_{nn})                     \\
         & \quad= -h_{1i}h_{in}-h_{1n}h_{in}^2+s_ih_{1n} - h_{1n}h_{nn}(1-h_{ii})                                                       \\
         & \quad= -\frac{1}{1-h_{nn}}\left((h_{1i}h_{in}+h_{1n}h_{in}^2)(1-h_{nn}) + h_{1n}h_{nn}h_{in}^2 + s_ih_{1n}(2h_{nn}-1)\right) \\
         & \quad= -\frac{1}{1-h_{nn}}\left(h_{in}\left(h_{1i}(1-h_{nn})+h_{1n}h_{in}\right) + s_ih_{1n}(2h_{nn}-1)\right),
    \end{align}
    and the constant term is
    \begin{align}
          & -(1-h_{nn})h_{1i}^2 - h_{1n}^2(1-h_{ii})-2h_{1n}h_{1i}h_{in} \notag                                              \\
        = & -\frac{1}{1-h_{nn}}\left((1-h_{nn})^2h_{1i}^2 +2h_{1n}h_{1i}h_{in}(1-h_{nn})+h_{1n}^2s_i+h_{1n}^2h_{in}^2\right) \\
        = & -\frac{1}{1-h_{nn}} \left(\left(h_{1i}(1-h_{nn})+h_{1n}h_{in}\right)^2 + h_{1n}^2s_i\right).
    \end{align}
    Therefore,
    \begin{equation}
        A_{-\{i, n\}} = \left(h_{nn}p^2 + \frac{(1-2h_{nn})h_{1n}}{1-h_{nn}}p - \frac{h_{1n}^2}{1-h_{nn}}\right) + B_i,
    \end{equation}
    where
    \begin{equation}
        B_i = -\frac{ph_{in}\left(h_{1i}(1-h_{nn})+h_{1n}h_{in}\right)+\left(h_{1i}(1-h_{nn})+h_{1n}h_{in}\right)^2}{s_i}.
    \end{equation}
    Since \(h_{1n}, h_{nn}, p\) are constants, \(\{A_{-\{i, n\}}\}_{i=1}^{n-1}\) and \(\{B_i\}_{i=1}^{n-1}\) are order-isomorphic. Furthermore, from \Cref{eq:adaptive_effect} we have
    \begin{equation}
        A'_{-\{i\}}
        =\frac{B_i}{(p+1)(1-h_{nn})}.
    \end{equation}
    Therefore, \(\{A'_{-\{i\}}\}_{i=2}^{n-1}\) and \(\{B_i\}_{i=2}^{n-1}\) are also order-isomorphic. The conclusion then follows from the transitivity of order-isomorphism.
\end{proof}

\subsection{Proof of a technical lemma}\label{adxsubsec:tech_lemma}
We will show that when \(A_{-\{1\}}, A_{-\{n\}} >0\), \(A_{-\{1, n\}} < A_{-\{n\}}\) implies \(A_{-\{1\}} < A_{-\{n\}}\). This guarantees \((x_n, y_n)\) to be the most influential sample since \(A_{-\{i\}} \leq 0\) for \(2\leq i\leq n-1\).

\begin{proof}
    Plugging in the formulas of \(A_{-\{1\}}, A_{-\{n\}}\), we have
    \begin{equation}
        A_{-\{1\}} < A_{-\{n\}} \iff \left(p-\frac{h_{1n}}{1-h_{nn}}\right)(ph_{nn}+h_{1n}) >(ph_{1n}+h_{11})\left(1-\frac{ph_{1n}}{1-h_{11}}\right).
    \end{equation}
    This is equivalent to
    \begin{equation}
        \left(h_{nn}+\frac{h_{1n}^2}{1-h_{11}}\right)p^2 + \frac{h_{1n}(h_{11}-h_{nn})}{(1-h_{11})(1-h_{nn})}p - \left(h_{11} + \frac{h_{1n}^2}{1-h_{nn}}\right) > 0.
    \end{equation}
    Recall from \Cref{eq:canc_cond} that \(A_{-\{1, n\}} < A_{-\{n\}}\) is equivalent to \(h_{1n}p + h_{11}(1-h_{nn})+h_{1n}^2 < 0\). It follows that
    \begin{align}
        \frac{h_{1n}(h_{11}-h_{nn})}{(1-h_{11})(1-h_{nn})}p - \left(h_{11} + \frac{h_{1n}^2}{1-h_{nn}}\right)
         & > \frac{h_{1n}(h_{11}-h_{nn})}{(1-h_{11})(1-h_{nn})}p + \frac{h_{1n}(1-h_{11})}{(1-h_{11})(1-h_{nn})}p \\
         & =\frac{h_{1n}}{1-h_{11}}p.
    \end{align}
    Therefore, it suffices to show
    \begin{equation}\label{eq:target}
        \left(h_{nn}+\frac{h_{1n}^2}{1-h_{11}}\right)p^2 + \frac{h_{1n}}{1-h_{11}}p > 0.
    \end{equation}
    We now discuss two cases.

    \textbf{Case 1: \(h_{1n}<0\)}. In this case, we must have \(p>0\) to ensure \Cref{eq:canc_cond}. Therefore, \Cref{eq:target} is equivalent to
    \begin{equation}
        h_{1n} + \left(h_{nn}(1-h_{11})+h_{1n}^2\right)p > 0.
    \end{equation}
    Plugging in \(p = -\frac{h_{11}(1-h_{nn})+h_{1n}^2}{h_{1n}}\), it suffices to show
    \begin{equation}\label{eq:1n1n}
        \left(h_{11}(1-h_{nn})+h_{1n}^2\right)\left(h_{nn}(1-h_{11})+h_{1n}^2\right) > h_{1n}^2.
    \end{equation}
    This is true since
    \begin{align}
          & \left(h_{11}(1-h_{nn})+h_{1n}^2\right)\left(h_{nn}(1-h_{11})+h_{1n}^2\right) \notag   \\
        = & \ h_{11}h_{nn}(1-h_{11}-h_{nn}) + h_{1n}^2(h_{11}+h_{nn}) + (h_{11}h_{nn}-h_{1n}^2)^2 \\
        > & \ h_{1n}^2(1-h_{11}-h_{nn}) + h_{1n}^2(h_{11}+h_{nn}) = h_{1n}^2.
    \end{align}

    \textbf{Case 2: \(h_{1n}>0\)}. In this case, we must have \(p<0\) to ensure \Cref{eq:canc_cond}. Therefore, \Cref{eq:target} is equivalent to
    \begin{equation}
        h_{1n} + \left(h_{nn}(1-h_{11})+h_{1n}^2\right)p < 0.
    \end{equation}
    Plugging in \(p = -\frac{h_{11}(1-h_{nn})+h_{1n}^2}{h_{1n}}\), it suffices to show
    \begin{equation}
        \left(h_{11}(1-h_{nn})+h_{1n}^2\right)\left(h_{nn}(1-h_{11})+h_{1n}^2\right) > h_{1n}^2,
    \end{equation}
    which is essentially \Cref{eq:1n1n}.

    Combining the above two cases finishes the proof as desired.
\end{proof}

\section{Omitted details from \texorpdfstring{\Cref{sec:experiment}}{Section 5}}\label{adxsec:exp}
\subsection{Empirical justification with synthetic dataset}\label{adsex:exp-synthetic}
We first demonstrate our theory of linear regression empirically, \Cref{thm:recover} in particular, with a carefully designed synthetic dataset to create the cancellation phenomenon. Firstly, we random sample \(\theta^\ast \in \mathbb{R}^d\) and \(X \in \mathbb{R}^{(n-2\cdot c)\times d}\) where each entrance is between \([-1, 1]\). Here, \(c\) is the size of two \emph{clusters} that will happen to create the cancellation phenomenon. We then artificially attached an all-one matrix \(\mathbbm{1} \in \mathbb{R}^{(2\cdot c) \times d}\) to (the bottom of) \(X\), which corresponds to the \emph{farmost} features of those two clusters. Then, we create the response \(y \in \mathbb{R}^n\) by first calculating the \emph{perfect response} \(y^\ast \coloneqq X \theta^{\ast}\), and perturb it by adding and subtracting some noise \(\epsilon\) from the two clusters, respectively. In particular, for each \(i \in [2\cdot c + 1, n]\), we sample a noise \(\epsilon_i \sim y^\ast_i Z\) proportional to its original magnitude \(y^\ast_i\), where \(Z \sim \mathcal{N}(1, \sigma^2)\) for some variance \(\sigma^2>0\). Finally, we note that we create each test data point \(x_{\text{test}} \in \mathbb{R}^d\) by again sampling each entry uniformly from \([-1, 1]\).

\begin{figure}[htpb]
    \centering
    \includegraphics[width=0.8\textwidth]{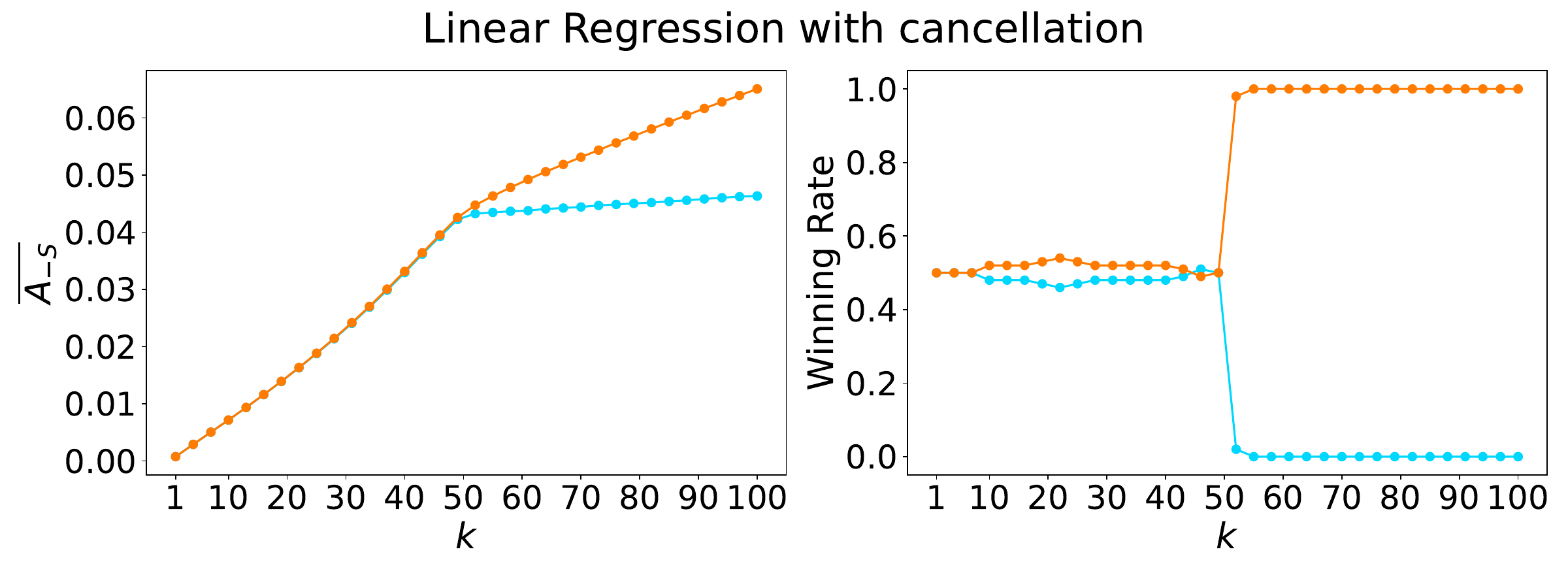}
    \includegraphics[width=\textwidth]{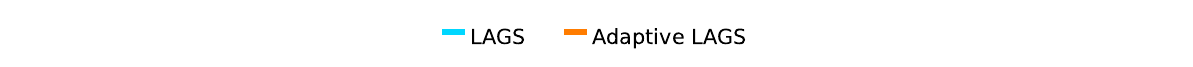}
    \caption{Adaptive Greedy v.s.\ Greedy Algorithm. \textbf{Left}: Averaged actual effect \(\overline{A_{-S}}\) measures the averaged actual effect induced by the greedy and adaptive greedy algorithms. \textbf{Right}: Winning rate indicates the proportion of instances where one algorithm outperforms the other.}
    \label{fig:experiments-synthetic}
\end{figure}

Intuitively, this training dataset contains two clusters on the opposite side of the ground truth \(\theta^\ast\), hence creating the cancellation phenomenon. For demonstration, we choose \(d = 10\), \(\sigma^2 = 0.2\), and \(n=1000\) with a cluster size of \(c = 50\). The results are reported in \Cref{fig:experiments-synthetic}. We see that when \(k < c\), the vanilla greedy and the adaptive greedy algorithm perform similarly. However, when \(k > c\), we immediately see a clear separation in terms of the performance of the vanilla greedy and the adaptive greedy algorithm, which gives strong evidence that the adaptive greedy can capture the marginal effect after removing the entire cluster.

\subsection{Details of the datasets}\label{adxsubsec:exp-dataset}
We detail two of the UCI datasets we chose in our experiments.
\begin{itemize}[leftmargin=*]
    \item Concrete Compressive Strength~\citep{misc_concrete_compressive_strength_165}: The dataset contains \(1030\) instances and \(8\) features.
    \item Waveform Database Generator~\citep{misc_waveform_database_generator_version_1_107}: It contains \(5000\) instances and \(21\) features, with three different classes. Since we consider binary classification for logistic regression, we select the first two classes for our experiments, which contain in total \(3254\) instances.
\end{itemize}
The two UCI datasets are licensed under CC-BY 4.0, while the MNIST dataset holds a CC BY-SA 3.0 license.

\textbf{Train/valid/test split.}\quad
For the first two UCI datasets, we randomly sample \(50\) data points as the test set and use the remaining for training. For MNIST, to control the scale of the experiments, we sample \(5000\) data points from the train split for training and \(50\) data points from the test split for testing.

\subsection{Details of the MLP training}\label{adxsubsec:exp-mlp}
We consider a simple \(2\)-layer MLP with input size \(784\) (to match the input size of images from MNIST~\citep{lecun1998gradient}) and a hidden-size of \(128\), with ReLU~\citep{agarap2018deep} as our activation function. We train the model using Stochastic Gradient Descent (SGD)~\citep{ruder2016overview} till convergence, with a learning rate of \(0.01\) and momentum of \(0.9\). Empirically, we observe that after \(30\) epochs the model converges, hence for simplicity, we set the default epochs to be \(30\).

\textbf{Hyper-parameter selection.}\quad The reported hyper-parameters above were selected via grid search.
We swept across hidden unit number (denoted as ``width'') $\in \{64,128\}$, learning rate (denoted as ``lr'') $\in \{0.01, 0.05, 0.1, 0.5\}$, momentum (denoted as $\beta$) $\in \{0.9,0.95\}$, and training epochs (denoted as ``epochs'') $\in \{30, 50\}$.
For each combination of hyper-parameters, we performed 5-fold cross-validation.
We present the comparisons in \Cref{tab:cross-validation}, which supported our final choice of the hyper-parameters in the main experiments (width$\ =128$, lr$\ =0.01$, $\beta=0.9$, epochs$\ =30$).

\begin{table}[htpb]
    \centering
    \caption{\textbf{Cross-validation performance} for MLP Model on MNIST. Width stands for the width of the hidden layer of the MLP, lr stands for the learning rate, and \(\beta\) stands for the momentum.}
    \label{tab:cross-validation}
    \begin{minipage}{0.45\textwidth}
        \begin{tabular}{cccc|c}
            \toprule
            width & lr   & \(\beta\) & epochs & Accuracy         \\
            \midrule
            64    & 0.01 & 0.9       & 30     & 91.96\%          \\
            128   & 0.01 & 0.9       & 30     & \textbf{93.44\%} \\
            64    & 0.01 & 0.9       & 50     & 92.88\%          \\
            128   & 0.01 & 0.9       & 50     & 93.40\%          \\
            64    & 0.01 & 0.95      & 30     & 92.48\%          \\
            128   & 0.01 & 0.95      & 30     & 93.12\%          \\
            64    & 0.01 & 0.95      & 50     & \textbf{93.48\%} \\
            128   & 0.01 & 0.95      & 50     & \textbf{94.68\%} \\
            64    & 0.05 & 0.9       & 30     & 88.44\%          \\
            128   & 0.05 & 0.9       & 30     & 87.64\%          \\
            64    & 0.05 & 0.9       & 50     & 86.64\%          \\
            128   & 0.05 & 0.9       & 50     & 89.60\%          \\
            64    & 0.05 & 0.95      & 30     & 45.80\%          \\
            128   & 0.05 & 0.95      & 30     & 41.24\%          \\
            64    & 0.05 & 0.95      & 50     & 53.32\%          \\
            128   & 0.05 & 0.95      & 50     & 54.60\%          \\
            \bottomrule
        \end{tabular}
    \end{minipage}\hfill
    \begin{minipage}{0.45\textwidth}
        \begin{tabular}{cccc|c}
            \toprule
            width & lr  & \(\beta\) & epochs & Accuracy \\
            \midrule
            64    & 0.1 & 0.9       & 30     & 39.36\%  \\
            128   & 0.1 & 0.9       & 30     & 40.08\%  \\
            64    & 0.1 & 0.9       & 50     & 41.64\%  \\
            128   & 0.1 & 0.9       & 50     & 45.36\%  \\
            64    & 0.1 & 0.95      & 30     & 13.48\%  \\
            128   & 0.1 & 0.95      & 30     & 13.36\%  \\
            64    & 0.1 & 0.95      & 50     & 10.8\%   \\
            128   & 0.1 & 0.95      & 50     & 15.71\%  \\
            64    & 0.5 & 0.9       & 30     & 11.68\%  \\
            128   & 0.5 & 0.9       & 30     & 11.68\%  \\
            64    & 0.5 & 0.9       & 50     & 11.68\%  \\
            128   & 0.5 & 0.9       & 50     & 11.68\%  \\
            64    & 0.5 & 0.95      & 30     & 11.04\%  \\
            128   & 0.5 & 0.95      & 30     & 11.20\%  \\
            64    & 0.5 & 0.95      & 50     & 11.20\%  \\
            128   & 0.5 & 0.95      & 50     & 11.20\%  \\
            \bottomrule
        \end{tabular}
    \end{minipage}
\end{table}

\subsection{Enhancing computational efficiency for the MLP experiments}\label{adxsubsec:exp-compute}
As mentioned in \Cref{sec:experiment}, the adaptive greedy algorithm is time-consuming as every run of the algorithm requires retraining for (\(k \times \text{number of ensembles} \)) times if only one point is selected at each step. In our case, one evaluation requires around \(10^4\) many retraining. Hence, we adopt several efficient approximations to mitigate the computational burden.

Firstly, when computing the vanilla individual influence of training data points for a converged MLP, we leverage one of the most memory and time-efficient approximation algorithms known in the literature named EK-FAC~\citep{george2018fast} to expedite computation. EK-FAC is efficient enough to deal with large language models, which suffices for our purpose. Additionally, we devise the following two strategies to reduce the computational cost when being adaptive:
\begin{itemize}[leftmargin=*]
    \item \textbf{Adaptation with steps}: We enhance the adaptive greedy with a tunable parameter, step size \(\ell\), i.e., we select the top \(\ell\) most influential training points into a tentative most influential subset \(S\) at each selection step. The standard adaptive greedy has \(\ell=1\). In our experiment, we set \(\ell = 5\) in particular.
    \item \textbf{Warm start}: At each step, we need to obtain a new model that is supposed to be trained without \(S\). To make the adaptive greedy algorithm more efficient, we obtain a new model by first initializing the model parameters from the \emph{previous step} (for each seed of the ensemble, respectively), and train without \(S\) until convergence. Empirically, we observed that compared to the cold start (which requires \(30\) epochs to converge), the warm start only requires \(8\) epochs to converge, significantly reducing the computational time.
\end{itemize}

\subsection{MLP experiments with multiple random seeds}\label{adxsubsec:exp-multiple}

We repeat the MLP experiments using multiple random seeds and report the results in \Cref{fig:experiments}.
The randomness in the experiments arises from neural network training. In summary, our results are generally consistent and robust across different random seeds. Specifically, the adaptive greedy algorithm consistently outperforms the vanilla greedy algorithm, though there are some fluctuations in the winning rate.

\begin{figure}[htpb]
    \centering
    \includegraphics[width=\linewidth]{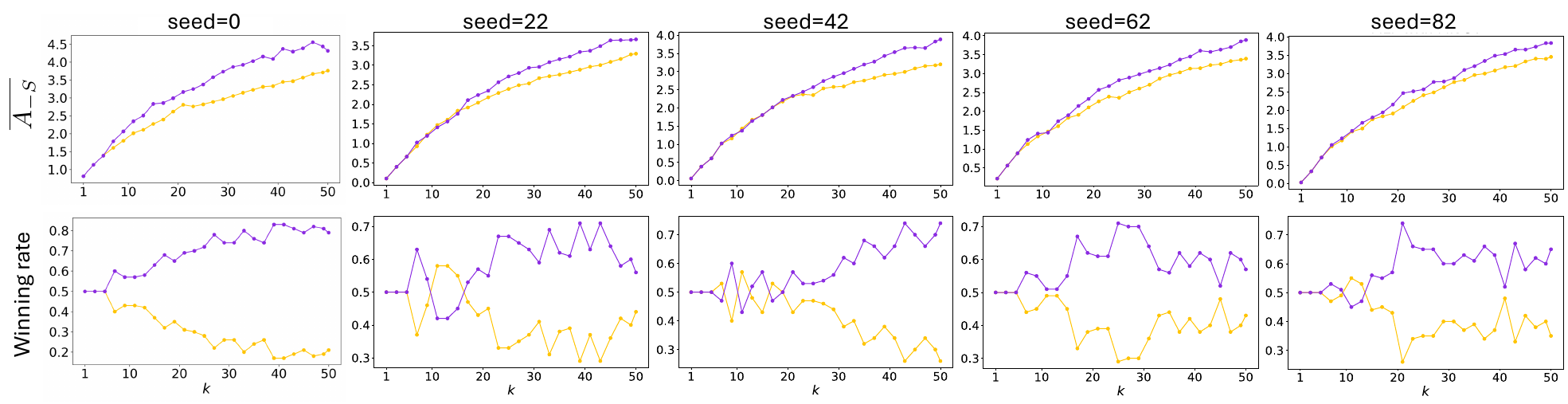}
    \includegraphics[width=.8\linewidth]{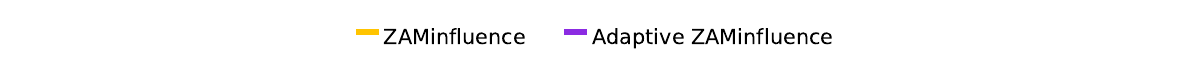}
    \caption{The MLP experiment under different random seeds (0, 22, 42, 62, 82). We report the actual effect and the winning rate. Results in the main paper in \Cref{fig:experiments} were obtained on seed \(0\).}
    \label{fig:mlp-seeds}
\end{figure}

\subsection{Computational resource and complexity}\label{adxsubsec:exp-complexity}
We conduct our experiments on \texttt{Intel(R) Xeon(R) Gold 6338 CPU @ 2.00GHz} with \texttt{Nvidia A40 GPU}. All experiments except the MLP experiment are efficient due to parallelization and low memory requirements. Specifically, for linear regression, both experiments on synthetic and UCI datasets run under \(20\) seconds. As for logistic regression, the experiment finishes in \(2\) minutes.

On the other hand, for the MLP experiments on MNIST, one step of the adaptive greedy selection algorithm for a test data point on \(5000\) train data points takes roughly \(200\) seconds with an average GPU memory usage of \texttt{40000MiB}. Therefore, we can't afford any parallelization over test points due to the high memory usage. Without parallelization, using the warm start and a step size of \(\ell=5\), the whole evaluation (\(5000\) train data points, \(50\) test data points, \(k=50\)) takes roughly takes \(28\) hours.

\section{Omitted details from \texorpdfstring{\Cref{sec:discussion}}{Section 5}}\label{adxsec:discussion}
\subsection{Discussion on the quadratic optimization}\label{adxsubsec:quad}
Recall from \Cref{eq:qs} that
\begin{align}
    Q_{-S} & = x_{\text{test}}^{\top}N^{-1}X_S^{\top}\left(I_k + X_SN^{-1}X_S^{\top}\right)(X_S\hat\theta-y_S) \notag                                               \\
           & = \sum_{i \in S} x_{\text{test}}^{\top}N^{-1}x_ir_i + \sum_{i\in S}(x_{\text{test}}^{\top}N^{-1}x_i)x_i^\top \cdot \sum_{i\in S}x_ir_i.\label{eq:qsum}
\end{align}
Denote \(v = (v_1, \cdots, v_n)^\top\) and \(B = (b_{ij})\), where \(b_{ij} = (x_{\text{test}}^{\top}N^{-1}x_i)x_i^\top x_jr_j\). Under the second-order approximation, \(k\)-MISS can be cast as a constrained quadratic optimization problem:
\begin{align}\label{eq:so}
    \max_{w \in \{0, 1\}^n} & \quad w^\top v + w^\top B w \\
    \text{s.t. } \          & \quad\|w\|_0 \leq k \notag
\end{align}

\subsection{Discussion on the submodular property}\label{adxsubsec:submodular}
From \Cref{eq:qsum}, we have
\begin{equation}\label{eq:q}
    Q_{-S} = \sum_{i \in S} v_i + \sum_{i, j\in S} b_{ij},
\end{equation}
Note \(Q_{-S}\) is submodular \(\iff\) for every \(S_1 \subset S_2\) and index \(k \notin S_1\),
\begin{equation}\label{eq:submod}
    Q_{-S_1\cup \{k\}} - Q_{-S_1} \geq Q_{-S_2\cup \{k\}} - Q_{-S_2}.
\end{equation}
Plugging \Cref{eq:q} into \Cref{eq:submod}, the submodular property requires that
\begin{equation}
    \sum_{i \in S_2 \setminus S_1} (b_{ik} + b_{ki}) \leq 0,
\end{equation}
which is equivalent to
\begin{equation}\label{eq:event}
    b_{ij} + b_{ji} \leq 0, \quad \forall i, j \in [n].
\end{equation}
\Cref{eq:event} is unlikely to hold especially if \(n\) is large, since it requires that the off-diagonal entries of \(S_{B} \coloneqq B + B^\top\) are all non-positive. For a more rigorous analysis, we focus on the case where the negative residuals \(r_i\)'s are i.i.d.\ and symmetrically distributed with respect to the origin. Denote \(s_{ij} = \sgn(x_{\text{test}}^{\top}N^{-1}x_ix_i^\top x_j)\) for \(i, j \in [n]\), and the event in \Cref{eq:event} as \(\mathcal{E}\). Under this probability model, we have
\begin{equation}
    \Pr(\mathcal{E})
    \leq \prod_{i \text{ is odd}} \Pr(s_{i(i+1)}r_{i+1}+s_{(i+1)i}r_{i} \leq 0)
    = \left(\frac{1}{2}\right)^{\lfloor{\frac{n}{2}\rfloor}},
\end{equation}
which decays exponentially with \(n\).

\newpage
\section*{NeurIPS Paper Checklist}

\begin{enumerate}

    \item {\bf Claims}
    \item[] Question: Do the main claims made in the abstract and introduction accurately reflect the paper's contributions and scope?
    \item[] Answer: \answerYes{}  
    \item[] Justification: The abstract and introduction clearly define the scope of both the theoretical and empirical results.
    \item[] Guidelines:
          \begin{itemize}
              \item The answer NA means that the abstract and introduction do not include the claims made in the paper.
              \item The abstract and/or introduction should clearly state the claims made, including the contributions made in the paper and important assumptions and limitations. A No or NA answer to this question will not be perceived well by the reviewers.
              \item The claims made should match theoretical and experimental results, and reflect how much the results can be expected to generalize to other settings.
              \item It is fine to include aspirational goals as motivation as long as it is clear that these goals are not attained by the paper.
          \end{itemize}

    \item {\bf Limitations}
    \item[] Question: Does the paper discuss the limitations of the work performed by the authors?
    \item[] Answer: \answerYes{} 
    \item[] Justification: The limitations are discussed in the last paragraph of \Cref{sec:discussion}. Our work focuses on analyzing the strengths and weaknesses of existing algorithms in MISS; however, the main limitation is that it does not contribute to algorithmic development in this field.
    \item[] Guidelines:
          \begin{itemize}
              \item The answer NA means that the paper has no limitation while the answer No means that the paper has limitations, but those are not discussed in the paper.
              \item The authors are encouraged to create a separate "Limitations" section in their paper.
              \item The paper should point out any strong assumptions and how robust the results are to violations of these assumptions (e.g., independence assumptions, noiseless settings, model well-specification, asymptotic approximations only holding locally). The authors should reflect on how these assumptions might be violated in practice and what the implications would be.
              \item The authors should reflect on the scope of the claims made, e.g., if the approach was only tested on a few datasets or with a few runs. In general, empirical results often depend on implicit assumptions, which should be articulated.
              \item The authors should reflect on the factors that influence the performance of the approach. For example, a facial recognition algorithm may perform poorly when image resolution is low or images are taken in low lighting. Or a speech-to-text system might not be used reliably to provide closed captions for online lectures because it fails to handle technical jargon.
              \item The authors should discuss the computational efficiency of the proposed algorithms and how they scale with dataset size.
              \item If applicable, the authors should discuss possible limitations of their approach to address problems of privacy and fairness.
              \item While the authors might fear that complete honesty about limitations might be used by reviewers as grounds for rejection, a worse outcome might be that reviewers discover limitations that aren't acknowledged in the paper. The authors should use their best judgment and recognize that individual actions in favor of transparency play an important role in developing norms that preserve the integrity of the community. Reviewers will be specifically instructed to not penalize honesty concerning limitations.
          \end{itemize}

    \item {\bf Theory Assumptions and Proofs}
    \item[] Question: For each theoretical result, does the paper provide the full set of assumptions and a complete (and correct) proof?
    \item[] Answer: \answerYes{} 
    \item[] Justification: Given the theoretical nature of this paper, we have diligently ensured the accuracy of the theorem statements and proofs.
    \item[] Guidelines:
          \begin{itemize}
              \item The answer NA means that the paper does not include theoretical results.
              \item All the theorems, formulas, and proofs in the paper should be numbered and cross-referenced.
              \item All assumptions should be clearly stated or referenced in the statement of any theorems.
              \item The proofs can either appear in the main paper or the supplemental material, but if they appear in the supplemental material, the authors are encouraged to provide a short proof sketch to provide intuition.
              \item Inversely, any informal proof provided in the core of the paper should be complemented by formal proofs provided in appendix or supplemental material.
              \item Theorems and Lemmas that the proof relies upon should be properly referenced.
          \end{itemize}

    \item {\bf Experimental Result Reproducibility}
    \item[] Question: Does the paper fully disclose all the information needed to reproduce the main experimental results of the paper to the extent that it affects the main claims and/or conclusions of the paper (regardless of whether the code and data are provided or not)?
    \item[] Answer: \answerYes{} 
    \item[] Justification: Our code is publicly available at \url{https://github.com/InfluentialSubset/MISS}.
    \item[] Guidelines:
          \begin{itemize}
              \item The answer NA means that the paper does not include experiments.
              \item If the paper includes experiments, a No answer to this question will not be perceived well by the reviewers: Making the paper reproducible is important, regardless of whether the code and data are provided or not.
              \item If the contribution is a dataset and/or model, the authors should describe the steps taken to make their results reproducible or verifiable.
              \item Depending on the contribution, reproducibility can be accomplished in various ways. For example, if the contribution is a novel architecture, describing the architecture fully might suffice, or if the contribution is a specific model and empirical evaluation, it may be necessary to either make it possible for others to replicate the model with the same dataset, or provide access to the model. In general. releasing code and data is often one good way to accomplish this, but reproducibility can also be provided via detailed instructions for how to replicate the results, access to a hosted model (e.g., in the case of a large language model), releasing of a model checkpoint, or other means that are appropriate to the research performed.
              \item While NeurIPS does not require releasing code, the conference does require all submissions to provide some reasonable avenue for reproducibility, which may depend on the nature of the contribution. For example
                    \begin{enumerate}
                        \item If the contribution is primarily a new algorithm, the paper should make it clear how to reproduce that algorithm.
                        \item If the contribution is primarily a new model architecture, the paper should describe the architecture clearly and fully.
                        \item If the contribution is a new model (e.g., a large language model), then there should either be a way to access this model for reproducing the results or a way to reproduce the model (e.g., with an open-source dataset or instructions for how to construct the dataset).
                        \item We recognize that reproducibility may be tricky in some cases, in which case authors are welcome to describe the particular way they provide for reproducibility. In the case of closed-source models, it may be that access to the model is limited in some way (e.g., to registered users), but it should be possible for other researchers to have some path to reproducing or verifying the results.
                    \end{enumerate}
          \end{itemize}

    \item {\bf Open access to data and code}
    \item[] Question: Does the paper provide open access to the data and code, with sufficient instructions to faithfully reproduce the main experimental results, as described in supplemental material?
    \item[] Answer: \answerYes{}
    \item[] Justification: Our code is publicly available at \url{https://github.com/InfluentialSubset/MISS}
    \item[] Guidelines:
          \begin{itemize}
              \item The answer NA means that paper does not include experiments requiring code.
              \item Please see the NeurIPS code and data submission guidelines (\url{https://nips.cc/public/guides/CodeSubmissionPolicy}) for more details.
              \item While we encourage the release of code and data, we understand that this might not be possible, so “No” is an acceptable answer. Papers cannot be rejected simply for not including code, unless this is central to the contribution (e.g., for a new open-source benchmark).
              \item The instructions should contain the exact command and environment needed to run to reproduce the results. See the NeurIPS code and data submission guidelines (\url{https://nips.cc/public/guides/CodeSubmissionPolicy}) for more details.
              \item The authors should provide instructions on data access and preparation, including how to access the raw data, preprocessed data, intermediate data, and generated data, etc.
              \item The authors should provide scripts to reproduce all experimental results for the new proposed method and baselines. If only a subset of experiments are reproducible, they should state which ones are omitted from the script and why.
              \item At submission time, to preserve anonymity, the authors should release anonymized versions (if applicable).
              \item Providing as much information as possible in supplemental material (appended to the paper) is recommended, but including URLs to data and code is permitted.
          \end{itemize}

    \item {\bf Experimental Setting/Details}
    \item[] Question: Does the paper specify all the training and test details (e.g., data splits, hyperparameters, how they were chosen, type of optimizer, etc.) necessary to understand the results?
    \item[] Answer: \answerYes{} 
    \item[] Justification: The details of the experiments are discussed in \Cref{adxsec:exp}.
    \item[] Guidelines:
          \begin{itemize}
              \item The answer NA means that the paper does not include experiments.
              \item The experimental setting should be presented in the core of the paper to a level of detail that is necessary to appreciate the results and make sense of them.
              \item The full details can be provided either with the code, in appendix, or as supplemental material.
          \end{itemize}

    \item {\bf Experiment Statistical Significance}
    \item[] Question: Does the paper report error bars suitably and correctly defined or other appropriate information about the statistical significance of the experiments?
    \item[] Answer: \answerNo{}
    \item[] Justification: The experiments involve enumerating all subsets with size $k$, which is too computationally expensive.
    \item[] Guidelines:
          \begin{itemize}
              \item The answer NA means that the paper does not include experiments.
              \item The authors should answer "Yes" if the results are accompanied by error bars, confidence intervals, or statistical significance tests, at least for the experiments that support the main claims of the paper.
              \item The factors of variability that the error bars are capturing should be clearly stated (for example, train/test split, initialization, random drawing of some parameter, or overall run with given experimental conditions).
              \item The method for calculating the error bars should be explained (closed form formula, call to a library function, bootstrap, etc.)
              \item The assumptions made should be given (e.g., Normally distributed errors).
              \item It should be clear whether the error bar is the standard deviation or the standard error of the mean.
              \item It is OK to report 1-sigma error bars, but one should state it. The authors should preferably report a 2-sigma error bar than state that they have a 96\% CI, if the hypothesis of Normality of errors is not verified.
              \item For asymmetric distributions, the authors should be careful not to show in tables or figures symmetric error bars that would yield results that are out of range (e.g. negative error rates).
              \item If error bars are reported in tables or plots, The authors should explain in the text how they were calculated and reference the corresponding figures or tables in the text.
          \end{itemize}

    \item {\bf Experiments Compute Resources}
    \item[] Question: For each experiment, does the paper provide sufficient information on the computer resources (type of compute workers, memory, time of execution) needed to reproduce the experiments?
    \item[] Answer: \answerYes{}
    \item[] Justification: The information on the computer resources is reported in \Cref{adxsubsec:exp-complexity}.
    \item[] Guidelines:
          \begin{itemize}
              \item The answer NA means that the paper does not include experiments.
              \item The paper should indicate the type of compute workers CPU or GPU, internal cluster, or cloud provider, including relevant memory and storage.
              \item The paper should provide the amount of compute required for each of the individual experimental runs as well as estimate the total compute.
              \item The paper should disclose whether the full research project required more compute than the experiments reported in the paper (e.g., preliminary or failed experiments that didn't make it into the paper).
          \end{itemize}

    \item {\bf Code Of Ethics}
    \item[] Question: Does the research conducted in the paper conform, in every respect, with the NeurIPS Code of Ethics \url{https://neurips.cc/public/EthicsGuidelines}?
    \item[] Answer: \answerYes{} 
    \item[] Justification: Every author of this submission has reviewed the code of ethics guidelines and confirms compliance.
    \item[] Guidelines:
          \begin{itemize}
              \item The answer NA means that the authors have not reviewed the NeurIPS Code of Ethics.
              \item If the authors answer No, they should explain the special circumstances that require a deviation from the Code of Ethics.
              \item The authors should make sure to preserve anonymity (e.g., if there is a special consideration due to laws or regulations in their jurisdiction).
          \end{itemize}

    \item {\bf Broader Impacts}
    \item[] Question: Does the paper discuss both potential positive societal impacts and negative societal impacts of the work performed?
    \item[] Answer: \answerNA{}
    \item[] Justification: Our work is theoretical in nature, and we don't see immediate societal impact.
    \item[] Guidelines:
          \begin{itemize}
              \item The answer NA means that there is no societal impact of the work performed.
              \item If the authors answer NA or No, they should explain why their work has no societal impact or why the paper does not address societal impact.
              \item Examples of negative societal impacts include potential malicious or unintended uses (e.g., disinformation, generating fake profiles, surveillance), fairness considerations (e.g., deployment of technologies that could make decisions that unfairly impact specific groups), privacy considerations, and security considerations.
              \item The conference expects that many papers will be foundational research and not tied to particular applications, let alone deployments. However, if there is a direct path to any negative applications, the authors should point it out. For example, it is legitimate to point out that an improvement in the quality of generative models could be used to generate deepfakes for disinformation. On the other hand, it is not needed to point out that a generic algorithm for optimizing neural networks could enable people to train models that generate Deepfakes faster.
              \item The authors should consider possible harms that could arise when the technology is being used as intended and functioning correctly, harms that could arise when the technology is being used as intended but gives incorrect results, and harms following from (intentional or unintentional) misuse of the technology.
              \item If there are negative societal impacts, the authors could also discuss possible mitigation strategies (e.g., gated release of models, providing defenses in addition to attacks, mechanisms for monitoring misuse, mechanisms to monitor how a system learns from feedback over time, improving the efficiency and accessibility of ML).
          \end{itemize}

    \item {\bf Safeguards}
    \item[] Question: Does the paper describe safeguards that have been put in place for responsible release of data or models that have a high risk for misuse (e.g., pretrained language models, image generators, or scraped datasets)?
    \item[] Answer: \answerNA{}
    \item[] Justification: The paper poses no such risks.
    \item[] Guidelines:
          \begin{itemize}
              \item The answer NA means that the paper poses no such risks.
              \item Released models that have a high risk for misuse or dual-use should be released with necessary safeguards to allow for controlled use of the model, for example by requiring that users adhere to usage guidelines or restrictions to access the model or implementing safety filters.
              \item Datasets that have been scraped from the Internet could pose safety risks. The authors should describe how they avoided releasing unsafe images.
              \item We recognize that providing effective safeguards is challenging, and many papers do not require this, but we encourage authors to take this into account and make a best faith effort.
          \end{itemize}

    \item {\bf Licenses for existing assets}
    \item[] Question: Are the creators or original owners of assets (e.g., code, data, models), used in the paper, properly credited and are the license and terms of use explicitly mentioned and properly respected?
    \item[] Answer: \answerYes{}
    \item[] Justification: We properly cite the datasets and include their licenses in \Cref{sec:experiment}.
    \item[] Guidelines:
          \begin{itemize}
              \item The answer NA means that the paper does not use existing assets.
              \item The authors should cite the original paper that produced the code package or dataset.
              \item The authors should state which version of the asset is used and, if possible, include a URL.
              \item The name of the license (e.g., CC-BY 4.0) should be included for each asset.
              \item For scraped data from a particular source (e.g., website), the copyright and terms of service of that source should be provided.
              \item If assets are released, the license, copyright information, and terms of use in the package should be provided. For popular datasets, \url{paperswithcode.com/datasets} has curated licenses for some datasets. Their licensing guide can help determine the license of a dataset.
              \item For existing datasets that are re-packaged, both the original license and the license of the derived asset (if it has changed) should be provided.
              \item If this information is not available online, the authors are encouraged to reach out to the asset's creators.
          \end{itemize}

    \item {\bf New Assets}
    \item[] Question: Are new assets introduced in the paper well documented and is the documentation provided alongside the assets?
    \item[] Answer:  \answerNA{}
    \item[] Justification: The paper does not release new assets.
    \item[] Guidelines:
          \begin{itemize}
              \item The answer NA means that the paper does not release new assets.
              \item Researchers should communicate the details of the dataset/code/model as part of their submissions via structured templates. This includes details about training, license, limitations, etc.
              \item The paper should discuss whether and how consent was obtained from people whose asset is used.
              \item At submission time, remember to anonymize your assets (if applicable). You can either create an anonymized URL or include an anonymized zip file.
          \end{itemize}

    \item {\bf Crowdsourcing and Research with Human Subjects}
    \item[] Question: For crowdsourcing experiments and research with human subjects, does the paper include the full text of instructions given to participants and screenshots, if applicable, as well as details about compensation (if any)?
    \item[] Answer: \answerNA{}
    \item[] Justification: The paper does not involve crowdsourcing nor research with human subjects.
    \item[] Guidelines:
          \begin{itemize}
              \item The answer NA means that the paper does not involve crowdsourcing nor research with human subjects.
              \item Including this information in the supplemental material is fine, but if the main contribution of the paper involves human subjects, then as much detail as possible should be included in the main paper.
              \item According to the NeurIPS Code of Ethics, workers involved in data collection, curation, or other labor should be paid at least the minimum wage in the country of the data collector.
          \end{itemize}

    \item {\bf Institutional Review Board (IRB) Approvals or Equivalent for Research with Human Subjects}
    \item[] Question: Does the paper describe potential risks incurred by study participants, whether such risks were disclosed to the subjects, and whether Institutional Review Board (IRB) approvals (or an equivalent approval/review based on the requirements of your country or institution) were obtained?
    \item[] Answer: \answerNA{}
    \item[] Justification: The paper does not involve crowdsourcing nor research with human subjects.
    \item[] Guidelines:
          \begin{itemize}
              \item The answer NA means that the paper does not involve crowdsourcing nor research with human subjects.
              \item Depending on the country in which research is conducted, IRB approval (or equivalent) may be required for any human subjects research. If you obtained IRB approval, you should clearly state this in the paper.
              \item We recognize that the procedures for this may vary significantly between institutions and locations, and we expect authors to adhere to the NeurIPS Code of Ethics and the guidelines for their institution.
              \item For initial submissions, do not include any information that would break anonymity (if applicable), such as the institution conducting the review.
          \end{itemize}

\end{enumerate}

\end{document}